\newtheorem{theorem}{Theorem}
\newtheorem{lemma}[theorem]{Lemma}
\theoremstyle{definition}
\newcommand{\al}{\alpha_{ij}}
\newcommand{\N}{\textsf N}
\newcommand{\h}{\theta_i}
\newcommand{\W}{W_{ij}}
\newcommand{\x}{\xi_{ij}}
\newcommand{\s}{q_i}
\newcommand{\bl}{\boldsymbol{\lambda}}
\newcommand{\ce}{\mathcal{E}}
\begin{document}
\runningauthor{Weller, Jebara}
\twocolumn[
\aistatstitle{Bethe Bounds and Approximating the Global Optimum} 
\aistatsauthor{Adrian Weller \And Tony Jebara}
\aistatsaddress{Columbia University \And Columbia Unversity}
]

\begin{abstract}
Inference in general Markov random fields (MRFs) is NP-hard, though identifying the maximum a posteriori (MAP) configuration of pairwise MRFs with submodular cost functions is efficiently solvable using graph cuts. Marginal inference, however, even for this restricted class, is in \#P. We prove new formulations of derivatives of the Bethe free energy, provide bounds on the derivatives and bracket the locations of stationary points, introducing a new technique called Bethe bound propagation. Several results apply to pairwise models whether associative or not. Applying these to discretized pseudo-marginals in the associative case we present a polynomial time approximation scheme for global optimization provided the maximum degree is $O(\log n)$, and discuss several extensions. 
\end{abstract}

\section{Introduction}

Markov random fields are fundamental tools in machine learning with broad application in areas including computer vision, speech recognition and computational biology. Two forms of inference are commonly employed: \textit{maximum a posteriori} (MAP), where the most likely configuration is returned; and \textit{marginal}, where the marginal probability distributions for each set of variables with a linking potential function are returned. In general, MAP inference is NP-hard \cite{Shi94} and marginal inference, even for pairwise models, is harder still in \#P \cite{SinJer89,Cooper90,DagumLuby93}.

An important class of MRFs, those with only unary and pairwise submodular cost functions, admits efficient MAP inference. This was first shown for binary models \cite{GrePorSeh89} and applied broadly in computer vision \cite{BoyKol04}, where the graph cuts method is particularly effective \cite{Sze06}. Recent work extended the application of this approach to multi-label submodular energies of up to third order \cite{RKAT08,SchFla06}. Yet marginal inference, even for binary pairwise models, is intractable with few known exceptions. Belief propagation is efficient (and exact) for trees, and loopy belief propagation is guaranteed to converge when the topology has one cycle \cite{YedFreWei01}.

Applying the same framework to general models, termed loopy belief propagation (LBP), has proved remarkably effective in some situations but fails in others and has no general guarantees on convergence. A key result is that belief propagation (BP) fixed points coincide with stationary points of the Bethe variational problem \cite{YedFreWei05}.  Stationary points, however, may not identify the global optimum of the the Bethe free energy. Subsequently, it was further shown that all stable BP fixed points are known to be local optima (rather than saddle points) of this problem, but not vice versa \cite{Hes03,Hes06}. Variational methods demonstrate that minimizing the Bethe free energy should deliver a good approximation to the true marginal distribution and recently \cite{Ruo12} proved that for submodular MRFs, the Bethe optimum is an upper bound on the true free energy and thus yields a desirable lower bound on the partition function. 


Marginal inference is a crucial problem in probabilistic systems. A noteworthy example is the Quick Medical Reference (QMR) problem \cite{Shwe91}, a graphical model involving 600 diseases and 4000 possible findings. Therein, medical diagnostics are performed by computing the posterior marginal probability of each disease given a set of possible findings. The marginal distribution over the presence of a disease must often be precisely estimated in order to determine the course of medical treatment. Thus, we seek the probability that a patient suffers from a condition, rather than the MAP estimate, which could be very different. 

Marginal inference also arises during learning or parameter estimation in Markov random fields. For instance, computing the gradients of a partition function in a maximum likelihood estimation procedure is equivalent to marginal inference. In learning problems, the intractability of the marginal inference problem requires the exploration of marginal approximation schemes \cite{Gan08}. However, in the general case, {\em both} exact marginal inference and approximate marginal inference are NP-hard \cite{Cooper90,DagumLuby93}. 


\subsection{Contribution}
We derive various properties of the Bethe free energy and apply them to discretized pseudo-marginals to prove a polynomial-time approximation scheme (PTAS) for the global minimum of the Bethe free energy for binary pairwise associative MRFs.

The idea is that if we can find the optimal discretized point on a sufficiently fine mesh that covers all possible locations of an optimum point within a distance of $\delta$, then we can bound the difference to the optimum by $\frac{1}{2} \Lambda \delta^2$ where $\Lambda$ is the greatest directional second derivative. To our knowledge, we present the first rigorous bounds on $\Lambda$. One reason this is difficult is that derivatives tend to infinity as singleton marginals approach the boundary cases of $0$ or $1$. Hence we need to prove bounds on the location away from these edges. 

We first prove various bounds including on the location of any stationary point of the Bethe free energy, as well as on the true marginals. In doing this we develop Bethe bound propagation (BBP) which sometimes produces remarkably tight bounds by itself. We then consider the second derivatives with a view to bounding $\Lambda$. Additional analysis allows us to prove that the discretized multi-label problem is submodular on any mesh and hence the discretized optimum can be found efficiently using graph cuts \cite{SchFla06}.

Various extensions are discussed in the closing section, including applications to non-associative models, to models that are themselves multi-label and to models with higher order terms.

\subsection{Related work}

A variety of heuristics have been proposed for marginal inference problems. Marginal inference in the QMR medical diagnostic problem has been explored with Markov Chain Monte Carlo (MCMC) \cite{MacKay98,ShweCooper91,DagumHorvitz93} methods, variational methods \cite{JaakkolaJordan99}, and search methods \cite{Dechter97}. Many of these heuristics are restricted to certain classes of graphical model (such as QMR). Here we explore another approach to approximate marginal inference by minimizing the Bethe free energy.

The minimization of Bethe free energy is often approached using loopy Belief propagation. However, there are few guarantees on the rate of convergence of LBP which prevent it from functioning as a PTAS for Bethe minimization \cite{WaiJor08}. An important contribution \cite{WelTeh01} showed that the Bethe free energy of a binary pairwise MRF may be considered as a function only of the singleton marginals, however this connection was provided without convergence results. 

A PTAS was recently proposed \cite{Shin12} for the location of a point whose derivative of the Bethe free energy has magnitude less than $\epsilon$. However, this identifies only an approximately stationary point (which may not be even a local minimum) that could be arbitrarily far from the global optimum. That result applies for a general binary pairwise MRF subject to an edge sparsity restriction that the maximum degree is $O(\log n)$. Here we primarily focus on associative models with the same degree restriction, but our deliverable not only satisfies the property in \cite{Shin12} but importantly is also guaranteed to have Bethe free energy within $\epsilon$ of the optimum. 

We note that the PTAS in \cite{Shin12} may provide the global optimum when the fixed point is unique and recent work \cite{Wat11} has enumerated necessary and sufficient conditions for uniqueness. Nevertheless, aside from these restricted settings, there are no prior polynomial-time methods for finding or rigorously approximating the global minimum of the Bethe free energy. Earlier work considered discretizations of pseudo-marginals but presented incomplete results \cite{Kor12}. We go significantly further in deriving additional key results which together admit the PTAS. These include explicit forms and bounds on the second derivatives, on the third derivatives and on the locations of stationary points.

\section{Preliminaries \& Notation}

We focus on a binary pairwise MRF over $n$ variables $X_1,\dots,X_n$ with topology $(\mathcal{V},\mathcal{E})$ and generally follow the notation of \cite{WelTeh01}. We assume\footnote{The energy $E$ can always be thus reparameterized with finite $\h$ and $W_{ij}$ terms provided $p(x)>0 \; \forall x$. There are reasonable distributions where this does not hold, i.e. $\exists x: p(x)=0$ but this can often be handled by assigning such configurations a sufficiently small positive probability $\epsilon$.\label{fn:finite}}
\begin{equation}\label{eq:E}
p(x)=\frac{e^{-E(x)}}{Z}, \; E=-\sum_{i \in \mathcal{V}} \h x_i -\sum_{(i,j)\in \mathcal{E}} \W x_i x_j
\end{equation}
where the partition function $Z=\sum_x e^{-E(x)}$ is a normalizing constant. Let $F$ be the Bethe free energy, so $F=E-S$ where $S$ is the Bethe approximation to the true entropy, $S=\sum_{(i,j)\in \ce} S_{ij} + \sum_{i \in \cal{V}} (1-z_i) S_i$. $S_{ij}$ is the entropy of a pseudo-marginal of $(X_i,X_j)$ on the local polytope, $S_i$ is the entropy of the singleton distribution and $z_i$ is the degree of $i$, that is the number of variables to which $X_i$ is adjacent. We assume the model is connected so all $z_i \geq 1$. For each node $i$ define sum of positive and negative incident edge weights: $W_i=\sum_{j \in \N(i): W_{ij}>0} W_{ij}$, $V_i=-\sum_{j \in \N(i): W_{ij}<0} W_{ij}$ where $\N(i)$ indicates the neighbors of node $i$. For a pseudo-marginal distribution $q$, let $q_i=p(X_i=1)$. Consistency and normalization constraints from the local polytope imply
\begin{equation}\label{eq:mu}
\mu_{ij}=\begin{pmatrix} 1 + \x -\s -q_j & q_j-\x \\ \s - \x & \x \end{pmatrix}
\end{equation}
for some $\x \in [0,\min(\s, q_j)]$, where $\mu_{ij}(a,b)=p(X_i=a,X_j=b)$ is the pairwise marginal. Let $\al=e^{\W}-1$. $\al=0 \Leftrightarrow \W=0$ may be assumed not to occur else the edge $(i,j)$ may be deleted. $\al$ has the same sign as $\W$, if positive then the edge $(i,j)$ is associative; if negative then the edge is repulsive. The MRF is associative if all edges are associative. As in \cite{WelTeh01}, one can solve for $\x$ explicitly in terms of $\s$ and $q_j$ by minimizing the free energy, leading to a quadratic equation with real roots
\begin{equation}\label{eq:xi2}
\alpha_{ij}\xi_{ij}^2 - [1+\alpha_{ij}(q_i+q_j)]\xi_{ij}+(1+\alpha_{ij})q_i q_j=0.
\end{equation}
For $\al>0$, $\x(q_i,q_j)$ is the lower root, for $\al<0$ it is the higher. Notice that when $\al=0$ (no edge relationship) this reduces as expected to $\x=p(X_i=1,X_j=1)=p(X_i=1)p(X_j=1)=q_i q_j$.

$S_{ij}$ is the entropy of $\mu_{ij}(q_i,q_j)$. Hence
\begin{equation}\label{eq:F}
\begin{split}
F(q)=&\sum_{(i,j)\in \ce} -\big(\W \x +S_{ij}(q_i,q_j) \big) \\
&\quad +\sum_{i \in \cal{V}} \big( -\h q_i + (z_i-1) S_i(q_i) \big).
\end{split}
\end{equation}
Collecting the pairwise terms for one edge, define
\begin{equation}\label{eq:f}
f_{ij}(q_i,q_j)=-\W \x(q_i, q_j) -S_{ij}(q_i,q_j).
\end{equation}
We are interested in \textit{discretized pseudo-marginals} where for each $q_i$ we restrict its possible values to a discrete set $D_i$ of points in $[0,1]$. Note we may often have $D_i \ne D_j$. Let $\mathcal{D}=\prod_{i \in V} D_i$. 

In \cite{WelTeh01}, the first partial derivative of the Bethe free energy is derived as
\begin{equation}\label{eq:1stderiv}
\frac{\partial F}{\partial q_i} = -\theta_i + \log Q_i \: \text{, where}
\end{equation} 
$$Q_i = \frac{ (1-q_i)^{z_i-1} }  { q_i^{z_i-1} } 
\frac{\prod_{j \in \N(i)} (q_i-\xi_{ij})}{\prod_{j \in \N(i)} (1+\xi_{ij}-q_i-q_j)}.$$

Recall the sigmoid function $\sigma(x)=1/(1+\exp(-x))$ which will be used for Bethe bounds. We write $A_i$ for the lower bound of $q_i$ and $B_i$ for the lower bound of $1-q_i$ so $A_i \leq q_i \leq (1-B_i)$. Define $\eta_i=\min(A_i,B_i)$.


\subsection{Submodularity}
In our context, a pairwise multi-label function on a set of ordered labels $X_{ij}=\{1,\dots,K_i\} \times \{1,\dots,K_j\}$ is \textit{submodular} if 
\begin{equation}\label{eq:sub}
\forall x, y \in X_{ij}, \; f(x \wedge y) + f(x \vee y) \leq f(x) + f(y)
\end{equation} where for $x=(x_1,x_2)$ and $y=(y_1,y_2)$, $(x \wedge y)=(\min(x_1,y_1),\min(x_2,y_2))$ and $(x \vee y)=(\max(x_1,y_1),\max(x_2,y_2))$. For binary variables this is equivalent to associativity. 

The key property for us is that if all the pairwise cost functions $f_{ij}$ over $D_i \times D_j$ from \eqref{eq:f} are submodular 
then the global discretized optimum may be found efficiently as a multi-label MAP inference problem using graph cuts \cite{SchFla06}. 

\section{Bounds \& Bethe bound propagation}

We use the technique of flipping variables, i.e. considering $Y_i=1-X_i$. Flipping a variable flips the parity of all its incident edges so associative $\leftrightarrow$ repulsive. Flipping both ends of an edge leaves its parity unchanged.

\subsection{Flipping all variables}\label{sec:flipall}

Consider a new model with variables $\{Y_i=1-X_i, i=1,\dots,n\}$ and the same edges. Instead of $\theta_i$s and $W_{ij}$s, let the new model have parameters $\phi_i$ and $V_{ij}$. We identify values such that the energies of all states are maintained up to a constant.\footnote{Any constant difference will be absorbed into the partition function and leave probabilities unchanged.}
\begin{align*}
E &= -\sum_{i \in \cal{V}} \theta_i X_i - \sum_{(i,j)\in \ce} W_{ij} X_i X_j \\
&= const -\sum_{i \in \cal{V}} \phi_i (1-X_i) - \sum_{(i,j) \in \ce} V_{ij}(1-X_i)(1-X_j).
\end{align*}
Matching coefficients yields
\begin{equation}\label{eq:flipall}
V_{ij}=W_{ij}, \; \phi_i=-\theta_i-\sum_{j \in \N(i)} W_{ij}=-\theta_i-W_i.
\end{equation}
If the original model was associative, so too is the new.

\subsection{Flipping some variables}\label{sec:flipsome}

Sometimes we flip only a subset $\cal{R} \subseteq \cal{V}$ of the variables. This can be useful, for example, to make the model locally associative around a variable, which can always be achieved by flipping just those neighbors to which it has a repulsive edge. Let $Y_i = 1-X_i$ if $i \in \cal{R},$ else $Y_i=X_i$ for $i \in \cal{S}$, where $\cal{S}=\cal{V} \setminus \cal{R}$. Let $\mathcal{E}_t=\{$edges with exactly $t$ ends in $\cal{R}\}$ for $t=0,1,2$. 


As in \ref{sec:flipall}, solving for $V_{ij}$ and $\phi_i$ such that energies are unchanged up to a constant,
\begin{align}
\label{eq:flipsome}
V_{ij} &= \begin{cases} W_{ij} & \mspace{-1mu}(i,j) \in \mathcal{E}_0 \cup \mathcal{E}_2,\\
-W_{ij} & \mspace{-1mu} (i,j) \in \mathcal{E}_1
\end{cases} \nonumber\\
\mspace{-4mu}\phi_i &= \begin{cases} \h + \sum_{(i,j) \in \mathcal{E}_1} W_{ij} & \mspace{-1mu} i \in \cal{S}, \\ -\h - \sum_{(i,j) \in 
\mathcal{E}_2} W_{ij} & \mspace{-4mu} i \in \cal{R}. \end{cases}
\end{align}

\begin{lemma}\label{lem:flipBethe}
Flipping any set of variables changes affected pseudo-marginal matrix entries' locations but not values. The Bethe free energy is unchanged up to a constant, hence the locations of stationary points are unaffected.
\end{lemma}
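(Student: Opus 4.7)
My plan is to decompose the lemma into the three claims it makes and handle each by reducing everything to the observation that the map $Y_i = 1-X_i$ for $i \in \mathcal{R}$ (and $Y_i = X_i$ otherwise) is an affine bijection on $\{0,1\}^n$ which induces a corresponding affine bijection on the local polytope. The global strategy is to show that both the energy and the entropy transform in a controlled way (additive constant for $E$, exactly preserved for $S$), so that $F = E - S$ differs only by a constant, and then use smoothness of the bijection to transport stationary points.

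For the first claim, I would spell out the induced map on pseudo-marginals: a flipped node has new singleton $q'_i = 1 - \s$, and for each edge the four entries of the new $\mu'_{ij}$ are a permutation of the original four entries of $\mu_{ij}$ in \eqref{eq:mu}. Concretely, for $(i,j) \in \mathcal{E}_2$ the new $\x' = p(Y_i=1,Y_j=1) = p(X_i=0,X_j=0) = 1 + \x - \s - q_j$; for $(i,j) \in \mathcal{E}_1$ with $i \in \mathcal{R}$, $\x' = p(X_i=0,X_j=1) = q_j - \x$; and for $(i,j) \in \mathcal{E}_0$ the entry is unchanged. In every case the multiset of four values is preserved and only their positions in the $2 \times 2$ matrix move, which is exactly the first claim.

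From this permutation property, the entropy pieces are immediately invariant: $S_i$ depends only on $\{\s, 1-\s\}$ and $S_{ij}$ is the Shannon entropy of the four entries of $\mu_{ij}$, so both equal the corresponding quantities in the new coordinates, and hence $S$ is identical in the two parameterizations. The energy side has already been done: the matched coefficients $V_{ij}, \phi_i$ in \eqref{eq:flipsome} were chosen precisely so that $E$ agrees with the original up to an additive constant. Subtracting gives that $F$ differs between the two coordinate systems only by that same additive constant, proving the second claim. For the third claim, the flip map between the two copies of the local polytope is affine with invertible Jacobian (a signed permutation on the flipped singleton coordinates, together with the induced linear map on the $\x$ coordinates), so $\nabla F = 0$ at $q$ in the original parameterization iff $\nabla F' = 0$ at the image point in the new parameterization, giving a bijection between stationary points.

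The main obstacle is really just bookkeeping: carefully writing out the entry permutation in the $\mathcal{E}_1$ case (where the two ends of the edge are treated asymmetrically) and checking that the induced $\x'$ still lies in the admissible interval $[0,\min(q'_i,q'_j)]$ required by the local polytope, so that the reparameterized model is well-defined on exactly the corresponding feasible region and the identification of stationary points is valid globally, not just locally.
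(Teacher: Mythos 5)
Your proposal is correct and follows essentially the same route as the paper's own (much terser) proof: energies match up to a constant by the choice of $V_{ij},\phi_i$, singleton entropies are symmetric in $q_i \leftrightarrow 1-q_i$, and the pairwise pseudo-marginal entries are merely permuted, so the Bethe entropy and hence $F$ change only by a constant. The extra detail you supply (explicit $\xi'$ formulas for $\mathcal{E}_0,\mathcal{E}_1,\mathcal{E}_2$ and the affine-bijection/Jacobian argument for transporting stationary points) is exactly what the paper leaves implicit, so there is nothing further to fix.
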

\begin{proof}
By construction energies are the same up to a constant. The singleton entropies are symmetric functions of $q_i$ and $1-q_i$ so are unaffected. The impact on pseudo-marginal matrix entries follows directly from definitions. Thus Bethe entropy is unaffected.
\end{proof}

\subsection{Bounds}

We derive several results that are useful in bounding the Bethe free energy as well as the marginals.
\begin{lemma}\label{lem:newst}
$\al \geq 0 \Rightarrow \x \geq q_i q_j, \al \leq 0 \Rightarrow \x \leq q_i q_j$
\end{lemma}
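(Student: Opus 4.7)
My plan is to substitute $\x = \s q_j$ into the defining quadratic~\eqref{eq:xi2}, check the sign of the resulting value, and then use a second evaluation point to resolve which of the two roots is relevant.

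Concretely, write $g(\xi) = \al \xi^2 - [1+\al(\s+q_j)]\xi + (1+\al)\s q_j$, so the two roots are the candidate values for $\x$. A short expansion gives the clean factorization
\begin{equation*}
g(\s q_j) = \al\, \s q_j (1-\s)(1-q_j),
\end{equation*}
whose sign coincides with that of $\al$ (and which vanishes on the boundary $\s,q_j \in \{0,1\}$, where the lemma holds trivially).

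For the associative case $\al > 0$, $g$ is a parabola opening upward and $\x$ is its lower root $\x_1$. The inequality $g(\s q_j) \geq 0$ forces $\s q_j$ to lie outside $(\x_1, \x_2)$. To rule out the upper side I would evaluate $g$ at $\min(\s,q_j)$: assuming WLOG $\s \leq q_j$, one finds $g(\s) = -\s(1-q_j) \leq 0$, so $\min(\s,q_j) \in [\x_1,\x_2]$, and in particular $\x_2 \geq \min(\s,q_j) \geq \s q_j$. Hence $\s q_j \leq \x_1 = \x$, as required. For the repulsive case $\al < 0$, $g$ opens downward and $\x = \x_2$ is the higher root. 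Now $g(\s q_j) \leq 0$ again puts $\s q_j$ outside $(\x_1,\x_2)$. Evaluating $g(0) = (1+\al)\s q_j \geq 0$ (using $\al = e^{\W}-1 > -1$) shows $0 \in [\x_1,\x_2]$, so $\x_1 \leq 0 \leq \s q_j$, ruling out the lower side and yielding $\s q_j \geq \x_2 = \x$.

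The only real obstacle is that the quadratic has two roots, so the sign of $g(\s q_j)$ alone only tells us $\s q_j$ lies outside $[\x_1,\x_2]$, not which side. The auxiliary evaluations at $\min(\s,q_j)$ in the associative case and at $0$ in the repulsive case are what separate the two sides and, together with the root-selection rule stated after~\eqref{eq:xi2}, deliver the desired direction of the inequality.
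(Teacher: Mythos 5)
Your proof is correct, but it takes a genuinely different route from the paper's. The paper's proof is a one-liner: rewrite \eqref{eq:xi2} as $\x - \s q_j = \al(\s-\x)(q_j-\x)$ and observe that $\s-\x$ and $q_j-\x$ are entries of the pseudo-marginal \eqref{eq:mu}, hence nonnegative, so the sign of $\x-\s q_j$ is the sign of $\al$. That argument uses only feasibility of $\x$ on the local polytope and works for any feasible root (and for $\al=0$) without needing to know which root is selected. You instead ignore feasibility in the interior and work purely from the root-selection rule plus a sign analysis of the parabola $g$ at the three points $\s q_j$, $\min(\s,q_j)$ and $0$; your key identity $g(\s q_j)=\al\, \s q_j(1-\s)(1-q_j)$ is exactly the constant term that drives the paper's sharper Lemma~\ref{lem:xiblb} (there obtained by substituting $\x=\s q_j+y$), so your route is closer in spirit to that later proof, and as a byproduct it re-establishes that the roots are real and locates $\s q_j$ relative to both of them. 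The trade-off is length and one small loose end: the claim that the boundary cases hold ``trivially'' is true but not entirely free---for instance $q_i=1$ with $\al>0$ requires the local-polytope constraint $\x \geq q_i+q_j-1=q_j$ together with $\x\leq q_j$ to conclude $\x=q_j=q_iq_j$---which is precisely the feasibility information the paper's argument uses globally; adding that one line makes your proof complete.
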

\begin{proof}
The quadratic equation \eqref{eq:xi2} for $\x$ may be rewritten  $\x-\s q_j = \al (\s-\x)(q_j-\x)$. Both terms in parentheses on the right are elements of the pseudo-marginal matrix $\mu$ so are constrained to be $\geq 0$.
\end{proof}
This simple result is sufficient to bound the location of stationary points of the Bethe free energy away from the edges of $0$ and $1$, though we improve the bounds in Lemma \ref{lem:xiblb}.
\begin{theorem}\label{thm:qsand1}
If all edges incident to $X_i$ are associative then at any stationary point of the Bethe free energy, $\sigma(\theta_i) \leq q_i \leq \sigma(\theta_i+W_i)$. Remark exactly the same sandwich result holds for the true marginal $p_i$.
\end{theorem}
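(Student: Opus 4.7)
The plan is to derive the lower bound $\sigma(\theta_i)\le q_i$ directly from the stationarity condition together with Lemma \ref{lem:newst}, and then obtain the upper bound ``for free'' by the flip-all-variables symmetry of Section \ref{sec:flipall}.

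\textbf{Lower bound via \eqref{eq:1stderiv} and Lemma \ref{lem:newst}.} At any stationary point, $\partial F/\partial q_i=0$, so \eqref{eq:1stderiv} gives $e^{\theta_i}=Q_i$. I would therefore upper bound $Q_i$ by $q_i/(1-q_i)$. Since every edge at $i$ is associative, $\alpha_{ij}\ge 0$ and Lemma \ref{lem:newst} yields $\xi_{ij}\ge q_iq_j$. Reading off the pseudo-marginal matrix \eqref{eq:mu}, this gives the two inequalities
\[
q_i-\xi_{ij}\le q_i(1-q_j),\qquad 1+\xi_{ij}-q_i-q_j\ge (1-q_i)(1-q_j),
\]
(the first is immediate; the second follows from $1+\xi_{ij}-q_i-q_j=(1-q_i)-(q_j-\xi_{ij})\ge(1-q_i)-q_j(1-q_i)$). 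Plugging into $Q_i$, the factors $(1-q_j)$ cancel over $j\in\N(i)$, and a short cancellation leaves $Q_i\le q_i/(1-q_i)$. Hence $e^{\theta_i}\le q_i/(1-q_i)$, i.e.\ $\sigma(\theta_i)\le q_i$.

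\textbf{Upper bound via flipping all variables.} I would apply \eqref{eq:flipall}: flip every $X_i$ to $Y_i=1-X_i$, obtaining a new model with $V_{ij}=W_{ij}$ and $\phi_i=-\theta_i-W_i$, which is still associative. By Lemma \ref{lem:flipBethe}, the set of Bethe stationary points is preserved, and the new singleton pseudo-marginal at $i$ is $1-q_i$. The lower bound already proved, applied to the flipped model, reads $\sigma(\phi_i)\le 1-q_i$, i.e.\ $1-\sigma(\theta_i+W_i)\le 1-q_i$, which rearranges to $q_i\le\sigma(\theta_i+W_i)$.

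\textbf{Same sandwich for the true marginal.} For $p_i$ a direct conditioning argument suffices, independent of the Bethe free energy. Writing $p_i/(1-p_i)$ as the ratio $\sum_{x_{-i}}e^{-E(1,x_{-i})}/\sum_{x_{-i}}e^{-E(0,x_{-i})}$ and factoring out the part of $E$ not involving $X_i$, the ratio becomes a weighted average of $e^{\theta_i+\sum_{j\in\N(i)}W_{ij}x_j}$ over $x_{-i}\in\{0,1\}^{n-1}$ with positive weights. Because each $W_{ij}\ge 0$, the exponent lies in $[\theta_i,\theta_i+W_i]$, hence $e^{\theta_i}\le p_i/(1-p_i)\le e^{\theta_i+W_i}$, giving the desired sandwich on $p_i$.

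\textbf{Expected main obstacle.} The only slightly delicate step is verifying the cancellation that produces $Q_i\le q_i/(1-q_i)$: one must check carefully that the product over $j\in\N(i)$ telescopes so that all $q_j$-dependent factors disappear and the factors of $q_i$ combine correctly with the $(1-q_i)^{z_i-1}/q_i^{z_i-1}$ prefactor. Once that algebra is in hand, the rest of the argument is essentially bookkeeping, with the upper bound handed to us by the flipping symmetry and the true-marginal statement coming from an elementary probabilistic computation.
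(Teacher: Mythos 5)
Your proposal is correct and takes essentially the same route as the paper: the lower bound from the stationarity condition $e^{\theta_i}=Q_i$ combined with Lemma \ref{lem:newst} to get $Q_i \leq q_i/(1-q_i)$, the upper bound by flipping all variables via \eqref{eq:flipall} and Lemma \ref{lem:flipBethe}, and the true-marginal sandwich by the same conditioning argument the paper compresses into its $m_{i=a}$ ratio (your weighted-average phrasing just spells out the paper's ``since all $W_{ij}>0$ the result follows''). No gaps.
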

\begin{proof}
We first prove the left inequality. Consider \eqref{eq:1stderiv}. Using $\al>0 \; \forall j \in \N(i)$ and Lemma \ref{lem:newst} we have
\begin{align*}
Q_i &= \frac{ \prod_{j \in \N(i)} (q_i-\xi_{ij}) }  { q_i^{z_i-1} } 
\frac{ (1-q_i)^{z_i-1} }{\prod_{j \in \N(i)} (1+\xi_{ij}-q_i-q_j)} \\
&\leq \frac{ \prod_{j \in \N(i)} q_i(1-q_j) }  { q_i^{z_i-1} } 
\frac{(1-q_i)^{z_i-1}}{\prod_{j \in \N(i)} (1-q_i)(1-q_j)} \\
&= \frac{q_i}{1-q_i} \text{ which gives the result.}
\end{align*}
To obtain the right inequality, flip all variables as in section \ref{sec:flipall}. Using the first inequality, \eqref{eq:flipall} and Lemma \ref{lem:flipBethe} yields $1-q_i \geq \sigma(-\h-W_i) \Leftrightarrow q_i \leq \sigma(\h+W_i)$ since $1-\sigma(-x)=\sigma(x)$. To show the result for the true marginal, let $m_{i=a}=\sum_{x:x_i=a} \exp(\sum_{i \in V} \h x_i +\sum_{(i,j)\in E} \W x_i x_j)$
then using \eqref{eq:E}, $p_i = \frac{m_{i=1}}{m_{i=1}+m_{i=0}}$. Since all $W_{ij}>0$ the result follows.
\end{proof}

Using \eqref{eq:flipsome} we obtain a more powerful corollary.
\begin{theorem}\label{thm:qsand2}
For general edge types (associative or repulsive), let $W_i=\sum_{j \in \N(i): W_{ij}>0} W_{ij}$, $V_i=-\sum_{j \in \N(i): W_{ij}<0} W_{ij}$. At any stationary point of the Bethe free energy, $\sigma(\theta_i-V_i) \leq q_i \leq \sigma(\theta_i+W_i)$. The same sandwich result holds for the true marginal $p_i$.
\end{theorem}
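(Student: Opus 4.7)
The plan is to reduce Theorem~\ref{thm:qsand2} directly to Theorem~\ref{thm:qsand1} via a carefully chosen partial flip. Define $\mathcal{R}=\{j\in\N(i):\W<0\}$, the neighbors of $i$ connected by a repulsive edge, and flip exactly the variables in $\mathcal{R}$ as in Section~\ref{sec:flipsome}. Every edge incident to $X_i$ either lies in $\mathcal{E}_0$ (previously associative, unchanged) or in $\mathcal{E}_1$ with exactly its $\mathcal{R}$-end flipped; by \eqref{eq:flipsome} the latter have new weight $V_{ij}=-W_{ij}>0$. Thus all edges incident to $X_i$ in the new model are associative, so Theorem~\ref{thm:qsand1} applies to it.

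Next I would track the parameters. Since $i\in\mathcal{S}$, \eqref{eq:flipsome} gives
\[
\phi_i=\h+\sum_{(i,j)\in\mathcal{E}_1}\W=\h+\sum_{j\in\mathcal{R}}\W=\h-V_i,
\]
and the new sum of positive incident edge weights at $i$ is
\[
W_i^{\mathrm{new}}=\sum_{j\in\N(i)\setminus\mathcal{R}}\W+\sum_{j\in\mathcal{R}}(-\W)=W_i+V_i.
\]
By Lemma~\ref{lem:flipBethe} the $i$-th pseudo-marginal is unaffected (since $i\notin\mathcal{R}$) and stationary points of the Bethe free energy are preserved. Applying Theorem~\ref{thm:qsand1} to the new model therefore yields $\sigma(\phi_i)\leq q_i\leq\sigma(\phi_i+W_i^{\mathrm{new}})$, which upon substitution is exactly
\[
\sigma(\h-V_i)\;\leq\;q_i\;\leq\;\sigma(\h-V_i+W_i+V_i)=\sigma(\h+W_i).
\]

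For the true marginal the same flipping argument works: the construction in Section~\ref{sec:flipsome} shifts the energy by a constant, so the distribution on the flipped variables is identical to the original (up to the relabeling $Y_j=1-X_j$ for $j\in\mathcal{R}$), and the true marginal $p_i$ is unchanged. The flipped model is locally associative at $i$, so the true-marginal half of Theorem~\ref{thm:qsand1} delivers $\sigma(\h-V_i)\leq p_i\leq\sigma(\h+W_i)$.

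I do not anticipate a real obstacle — the proof is essentially bookkeeping. The only place to be careful is verifying that the sign flipping on the edges incident to $i$ really does turn \emph{every} such edge associative (handled by the choice of $\mathcal{R}$), and that the $\phi_i$ formula for $i\in\mathcal{S}$ rather than $i\in\mathcal{R}$ is used; both are immediate from \eqref{eq:flipsome}.
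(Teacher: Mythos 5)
Your proposal is correct and follows exactly the paper's route: flip $\mathcal{R}=\{j\in\N(i):W_{ij}<0\}$ via \eqref{eq:flipsome} to make the model locally associative at $X_i$, then invoke Theorem \ref{thm:qsand1} together with Lemma \ref{lem:flipBethe}. Your explicit bookkeeping ($\phi_i=\theta_i-V_i$, new incident positive weight $W_i+V_i$) just spells out what the paper's one-line proof leaves implicit.
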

\begin{proof}
Using \eqref{eq:flipsome}, flip all variables adjacent to $X_i$ with a repulsive edge, i.e. set $\mathcal{R}=\{j \in \N(i):W_{ij}<0\}$. The resulting new model is fully associative around $X_i$ so we may apply Theorem \ref{thm:qsand1} to yield the result.
\end{proof}
The following lemma will be useful.
\begin{lemma}\label{lem:st}
For $q_i,q_j\in[0,1], 0 \leq \s+q_j-2\s q_j \leq 1$.
\end{lemma}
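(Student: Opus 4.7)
The plan is to handle the two inequalities separately by finding algebraic rearrangements that display $q_i + q_j - 2q_iq_j$ (respectively $1 - (q_i+q_j-2q_iq_j)$) as a sum of two products of factors that are manifestly non-negative for $q_i, q_j \in [0,1]$.

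For the lower bound, I would rewrite
\begin{equation*}
q_i + q_j - 2q_iq_j = q_i(1-q_j) + q_j(1-q_i),
\end{equation*}
which is a sum of two products of numbers in $[0,1]$ and hence $\geq 0$. (A probabilistic sanity check: if $X_i, X_j$ are independent Bernoulli with parameters $q_i, q_j$, this quantity equals $\Pr(X_i \neq X_j)$, so it must lie in $[0,1]$.)

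For the upper bound, I would subtract from $1$ and rearrange:
\begin{equation*}
1 - (q_i + q_j - 2q_iq_j) = (1-q_i)(1-q_j) + q_i q_j,
\end{equation*}
which is again a sum of products of quantities in $[0,1]$ and therefore $\geq 0$, giving $q_i + q_j - 2q_i q_j \leq 1$.

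There is no real obstacle here; the entire content is spotting the two factorizations. Once both identities are written down, the conclusion is immediate from $q_i, q_j \in [0,1]$ and no calculus or case analysis is needed.
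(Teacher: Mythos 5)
Your proof is correct and follows essentially the same elementary route as the paper: for the upper bound you use the identical identity $1-(q_i+q_j-2q_iq_j)=(1-q_i)(1-q_j)+q_iq_j$, and for the lower bound your direct decomposition $q_i(1-q_j)+q_j(1-q_i)\geq 0$ is a slightly more transparent version of the paper's bound $f\geq 2\min(q_i,q_j)\bigl(1-\max(q_i,q_j)\bigr)\geq 0$. No gaps.
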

\begin{proof}
Let $f=\s+q_j-2\s q_j$. To show the left inequality, consider $m=\min(q_i,q_j)$ and $M=\max(q_i,q_j)$, then $f \geq 2m(1-M) \geq 0$. For the right inequality observe $1-f=(1-\s)(1-q_j)+\s q_j \geq 0.$
\end{proof}

\begin{lemma}[Better lower bound for $\x$]\label{lem:xiblb}
If $\al>0$, then $\xi_{ij}\geq q_iq_j+\al q_i q_j (1-q_i)(1-q_j) / [1+\alpha_{ij} (q_i+q_j-2q_iq_j)]$, equality only possible at an edge, i.e. one or both of $q_i, q_j \in \{0,1\}$.
\end{lemma}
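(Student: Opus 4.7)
The plan is to start from the identity derived in the proof of Lemma \ref{lem:newst}, namely
\[
\xi_{ij}-q_iq_j = \alpha_{ij}(q_i-\xi_{ij})(q_j-\xi_{ij}),
\]
and then substitute $\xi_{ij} = q_iq_j + \delta$ with $\delta \ge 0$ (nonnegativity coming from Lemma \ref{lem:newst} applied to $\alpha_{ij}>0$). Expanding the two factors gives $q_i-\xi_{ij}=q_i(1-q_j)-\delta$ and $q_j-\xi_{ij}=q_j(1-q_i)-\delta$, so the identity becomes
\[
\delta=\alpha_{ij}\bigl[q_iq_j(1-q_i)(1-q_j)-\delta(q_i+q_j-2q_iq_j)+\delta^2\bigr].
\]
Rearranging collects the linear-in-$\delta$ terms on the left:
\[
\delta\bigl[1+\alpha_{ij}(q_i+q_j-2q_iq_j)\bigr] = \alpha_{ij}q_iq_j(1-q_i)(1-q_j)+\alpha_{ij}\delta^2.
\]

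The next step is to note that the bracket $[1+\alpha_{ij}(q_i+q_j-2q_iq_j)]$ is strictly positive: by Lemma \ref{lem:st}, $q_i+q_j-2q_iq_j\in[0,1]$, and $\alpha_{ij}>0$. Dividing through and discarding the nonnegative term $\alpha_{ij}\delta^2/[1+\alpha_{ij}(q_i+q_j-2q_iq_j)]$ from the right side yields exactly
\[
\delta \;\ge\; \frac{\alpha_{ij}q_iq_j(1-q_i)(1-q_j)}{1+\alpha_{ij}(q_i+q_j-2q_iq_j)},
\]
which, unwinding $\delta=\xi_{ij}-q_iq_j$, is the claimed bound. For the equality statement, notice that we lose nothing only when $\alpha_{ij}\delta^2=0$, i.e.\ $\delta=0$ (since $\alpha_{ij}>0$), which by the original identity forces $\alpha_{ij}q_i(1-q_j)q_j(1-q_i)=0$, i.e.\ at least one of $q_i,q_j$ lies in $\{0,1\}$.

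There is really no hard step here: the whole argument is the observation that the identity $\xi_{ij}-q_iq_j=\alpha_{ij}(q_i-\xi_{ij})(q_j-\xi_{ij})$ can be bootstrapped once by substituting $\xi_{ij}\approx q_iq_j$ into the right-hand side, and the self-consistency leftover appears only as a nonnegative $\alpha_{ij}\delta^2$ term that can be dropped. The only thing to be careful about is confirming the denominator's positivity (handled by Lemma \ref{lem:st}) and tracking the equality case, both of which are straightforward.
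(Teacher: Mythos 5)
Your proof is correct and takes essentially the same route as the paper: the same substitution $\xi_{ij}=q_iq_j+\delta$ into the defining quadratic \eqref{eq:xi2} (equivalently the identity $\xi_{ij}-q_iq_j=\alpha_{ij}(q_i-\xi_{ij})(q_j-\xi_{ij})$ from Lemma \ref{lem:newst}), with Lemma \ref{lem:st} supplying positivity of the denominator. The only cosmetic difference is that you replace the paper's convexity/tangent-at-$y=0$ argument with the equivalent algebraic step of isolating the linear term and dropping the nonnegative $\alpha_{ij}\delta^2$, and the equality case is handled identically.
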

\begin{proof}
Write $\xi_{ij}=q_iq_j+y$ and substitute into \eqref{eq:xi2},
$$\al y^2 -y[1+\al(\s+q_j-2\s q_j)]+\al \s q_j(1-\s)(1-q_j)=0.$$
We have a convex parabola which at $y=0$ is above the abscissa (unless $q_i$ or $q_j \in \{0,1\}$) and has negative gradient by Lemma \ref{lem:st}. Hence all roots are at $y \geq 0$ and given convexity we can bound below using the tangent at $y=0$ which yields the result.
\end{proof}

\begin{lemma}[Upper bound for $\x$]\label{lem:xiub}
If $\al>0$, then $q_j-\xi_{ij} \geq \frac{q_j(1-q_i)}{1+\al(\s+q_j-2\s q_j)} \geq \frac{q_j(1-q_i)}{1+\al} 
\\q_i-\xi_{ij} \geq \frac{q_i(1-q_j)}{1+\al(\s+q_j-2\s q_j)} \geq \frac{q_i(1-q_j)}{1+\al} 
$. \\
Also $\x \leq m(\al + M) / (1+\al) \Rightarrow \x-\s q_j \leq \frac{\al m (1-M)}{1+\al}$.
\end{lemma}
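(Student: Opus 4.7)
The plan is to derive a clean self-referential identity for $q_j - \xi_{ij}$ using the factored form of \eqref{eq:xi2} already exploited in Lemma \ref{lem:newst}, and then upper-bound the quantity $q_i - \xi_{ij}$ that appears in it. From the proof of Lemma \ref{lem:newst} we have $\xi_{ij} - q_i q_j = \al(q_i - \xi_{ij})(q_j - \xi_{ij})$. Subtracting this from $q_j - q_i q_j = q_j(1-q_i)$ and rearranging yields the central identity
\begin{equation*}
(q_j - \xi_{ij})\bigl[1 + \al(q_i - \xi_{ij})\bigr] = q_j(1 - q_i).
\end{equation*}
Since non-negativity of the entries of $\mu_{ij}$ in \eqref{eq:mu} gives $q_i \geq \xi_{ij}$, and since $\al > 0$, the bracketed factor is at least $1$, so inverting is safe.

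For the first stated inequality, I need an upper bound on $q_i - \xi_{ij}$. Lemma \ref{lem:newst} gives $\xi_{ij} \geq q_i q_j$, so $q_i - \xi_{ij} \leq q_i(1-q_j) \leq q_i(1-q_j) + q_j(1-q_i) = q_i + q_j - 2 q_i q_j$. Substituting this into the denominator of the identity (the inequality reverses there) gives $q_j - \xi_{ij} \geq q_j(1-q_i)/[1 + \al(q_i + q_j - 2 q_i q_j)]$, and Lemma \ref{lem:st} weakens the denominator further to $1 + \al$. The analogous bound on $q_i - \xi_{ij}$ follows immediately from the symmetry of \eqref{eq:xi2} in $(q_i, q_j)$ (the same derivation applies with the roles swapped).

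For the final chain, set $m = \min(q_i, q_j)$ and $M = \max(q_i, q_j)$ and apply whichever of the two bounds corresponds to the index attaining $m$. This yields $\xi_{ij} \leq m - m(1-M)/(1+\al) = m(\al + M)/(1+\al)$. Subtracting $q_i q_j = mM$ and combining the fractions gives
\begin{equation*}
\xi_{ij} - q_i q_j \leq \frac{m(\al + M) - m M(1+\al)}{1+\al} = \frac{\al\, m(1-M)}{1+\al},
\end{equation*}
as claimed. The only step requiring any real insight is spotting the self-referential identity; once that is in hand the rest is routine algebra combined with Lemmas \ref{lem:newst} and \ref{lem:st}, so I do not anticipate a serious obstacle.
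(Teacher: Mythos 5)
Your proof is correct, but it proceeds by a genuinely different mechanism than the paper's. The paper substitutes $y=\xi_{ij}-q_j$ into \eqref{eq:xi2}, observes that the resulting convex parabola is nonpositive at $y=0$, and bounds the relevant root by a tangent-line (Newton-step) argument taken at $y=q_j(q_i-1)$, where the derivative evaluates to $-[1+\al(q_i+q_j-2q_iq_j)]$; the $1+\al$ weakening and the final chain for $\x-\s q_j$ then follow exactly as in your last paragraph. You instead extract the exact identity $(q_j-\x)\bigl[1+\al(q_i-\x)\bigr]=q_j(1-q_i)$ from the factored form $\x-\s q_j=\al(\s-\x)(q_j-\x)$ used in Lemma \ref{lem:newst}, and then bound the bracket using $\x \geq \s q_j$ and Lemma \ref{lem:st}. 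Your route avoids any convexity or derivative reasoning and is purely algebraic; it also quietly yields a slightly sharper intermediate bound, namely $q_j-\x \geq q_j(1-q_i)/[1+\al\, q_i(1-q_j)]$, which you then weaken to match the stated denominator $1+\al(\s+q_j-2\s q_j)$. The paper's tangent argument, for its part, generalizes more directly to situations where one only knows a one-sided location of the root of a convex function rather than an exact factorization, but for this lemma both arguments are complete and all your steps (nonnegativity of $q_i-\x$ from \eqref{eq:mu}, the symmetry swap of $q_i$ and $q_j$, and the algebra giving $\x-\s q_j \leq \al m(1-M)/(1+\al)$) check out.
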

\begin{proof}
We prove the first inequality. The second follows by Lemma \ref{lem:st} and those for $q_i-\x$ follow by symmetry. The final inequality follows by combining the earlier ones. Let $\x=q_j+y$ and substitute into \eqref{eq:xi2}
$$ \al y^2 + y[\al (q_j-\s) - 1] + q_j(\s-1)=0.$$
The function is a convex parabola which at $y=0$ is at $q_j(\s-1) \leq 0$.\footnote{This confirms neatly that we must take the left root else $y>0 \Rightarrow \mu_{01}<0$ (a contradiction).} From Lemma \ref{lem:newst} we know that the left root is at $\x \geq \s q_j$ so we may take the derivative there, i.e. at $q_j+y=\s q_j \Leftrightarrow y=q_j(\s-1)$ and by convexity use this to establish a lower bound for $q_j-\x$. That derivative is $2\al \s q_j - 2\al q_j + \al q_j - \al \s -1 = -[1+\al (q_i+q_j-2q_i q_j)]$. 
\end{proof}

\begin{lemma}\label{lem:mugz}
Unless $q_i$ or $q_j \in \{0,1\}$, all entries of the pseudo-marginal $\mu_{ij}$ are strictly $> 0$, whether $(i,j)$ is associative or repulsive.\footnote{Here we assume $\al$ is finite, see footnote \ref{fn:finite}.}
\end{lemma}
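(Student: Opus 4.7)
The plan is to check directly that all four entries of $\mu_{ij}$ in \eqref{eq:mu} are strictly positive when $q_i,q_j\in(0,1)$, handling the associative and repulsive cases separately, then unifying via the flipping technique of Section \ref{sec:flipsome}.

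First I would dispose of the associative case $\al>0$. The entry $\x$ is positive by Lemma \ref{lem:newst}, since $\x\geq \s q_j>0$ under $q_i,q_j\in(0,1)$. The off-diagonal entries $q_i-\x$ and $q_j-\x$ are strictly positive by Lemma \ref{lem:xiub}, which supplies the explicit lower bounds $q_i(1-q_j)/(1+\al)$ and $q_j(1-q_i)/(1+\al)$, both nonzero when $q_i,q_j\in(0,1)$. For the final entry $1+\x-\s-q_j$, I would use $\x\geq \s q_j$ from Lemma \ref{lem:newst} to write
\begin{equation*}
1+\x-\s-q_j \;\geq\; 1+\s q_j-\s-q_j \;=\; (1-\s)(1-q_j) \;>\; 0.
\end{equation*}

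Next I would reduce the repulsive case $\al<0$ to the associative one using Lemma \ref{lem:flipBethe}. Take $\mathcal{R}=\{j\}$ and flip $X_j$ only; by \eqref{eq:flipsome} the new edge weight is $V_{ij}=-\W>0$, so in the flipped model the edge $(i,j)$ is associative. By Lemma \ref{lem:flipBethe}, the pseudo-marginal entries in the new model are exactly the entries of $\mu_{ij}$ after a permutation of rows (the $X_j$-axis is reversed), so their values match. The associative case applied in the flipped model, at marginals $q_i$ and $1-q_j$, both still in $(0,1)$, yields strict positivity of all entries there, hence of all entries of the original $\mu_{ij}$.

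I do not anticipate a real obstacle: everything reduces to invoking the bounds already proved in Lemmas \ref{lem:newst}, \ref{lem:xiblb}, \ref{lem:xiub}, together with the observation from Lemma \ref{lem:flipBethe} that flipping permutes but does not change pseudo-marginal entries. The only thing to be careful about is the strict-inequality version of $\x\geq \s q_j$ for the $\mu_{00}$ entry, but even the weak inequality combined with $q_i,q_j\in(0,1)$ gives the strict bound $(1-q_i)(1-q_j)>0$.
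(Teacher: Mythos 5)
Your proposal is correct and follows essentially the same route as the paper: bound each entry of $\mu_{ij}$ element-wise in the associative case using Lemmas \ref{lem:newst} and \ref{lem:xiub} (the paper records exactly the matrix of lower bounds you derive), then reduce the repulsive case to the associative one by flipping one endpoint and invoking Lemma \ref{lem:flipBethe}. The only trivial slip is that flipping $X_j$ permutes the columns of $\mu_{ij}$ rather than the rows, which does not affect the argument.
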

\begin{proof}
First assume $\al>0$. Considering \eqref{eq:mu} and using Lemmas \ref{lem:newst} and \ref{lem:xiub}, we have that element-wise 
\begin{equation}\label{mulb}
\mu_{ij} \geq \begin{pmatrix} (1-\s)(1-q_j) & q_j(1-\s)/(1+\al) \\ q_i(1-q_j)/(1+\al) & \s q_j \end{pmatrix}
\end{equation}
which proves the result for this case. If $\al<0$ then flip either $q_i$ or $q_j$. As in the proof of Lemma \ref{lem:flipBethe}, pseudo-marginal entries change position but not value.
\end{proof}

\subsection{Bethe bound propagation (BBP)}

We have already derived bounds on stationary points in Theorems \ref{thm:qsand1} and \ref{thm:qsand2}. Here we show for variables with only associative edges how we can iteratively improve these bounds, sometimes with striking results. Note that a fully associative model is not required, and as in section \ref{sec:flipsome}, \textit{any} model may be selectively flipped to yield local associativity around a particular node.

We first assume all $\al \geq 0$ and adopt the approach of Theorem \ref{thm:qsand1}, now using the better bound from Lemma \ref{lem:xiblb} to obtain
\begin{align*}
q_i-\xi_{ij} &\leq  q_i-q_iq_j-\frac{\alpha_{ij} q_i q_j (1-q_i)(1-q_j)}{ 1+\alpha_{ij}(q_i+q_j-2q_iq_j)}\\
&=q_i(1-q_j)\Big[1- \frac{\al q_j(1-q_i)}{1+\alpha_{ij}(q_i+q_j-2q_iq_j)}\Big], 
\end{align*}
\begin{align*}
&1+\x -q_i-q_j \geq \\
&1+\s q_j -\s -q_j +\frac{\alpha_{ij} q_i q_j (1-q_i)(1-q_j)}{ 1+\alpha_{ij}(q_i+q_j-2q_iq_j)} \\
&=(1-\s)(1-q_j)\Big[1+ \frac{\al \s q_j}{1+\alpha_{ij}(q_i+q_j-2q_iq_j)}\Big].
\end{align*}

Hence $Q \leq \frac{\s}{1-\s} \prod_{j \in \N(i)} R_{ij}^{-1}$ where 
\begin{align*}
R_{ij} &= \frac{1+\frac{\al \s q_j}{1+\al(\s+q_j-2\s q_j)}} {1-\frac{\al q_j(1-\s)}{1+\al(\s+q_j-2\s q_j)}} = 1 + \frac{\al q_j}{1+ \al \s (1-q_j)},
\end{align*}
monotonically increasing with $q_j$ and decreasing with $q_i$. Hence 
\begin{equation}\label{eq:Rij}
e^{W_{ij}} = 1 + \al \geq R_{ij} \geq L_{ij}:=1+ \frac{\al A_j}{1+\al (1-B_i) (1-A_j)}
\end{equation}
Using Theorem \ref{thm:qsand1} we initialize $A_i=\sigma(\theta_i)$ and $B_i=1-\sigma(\theta_i+W_i)$.

Using \eqref{eq:1stderiv}, at any stationary point we must have $$q_i \geq 1 / [1+\exp(-\h)/L_i]$$ where $L_i = \prod_{j \in \N(i)} L_{ij}$. Intuitively, in an associative model, if variable $i$ has neighbors $j$ which are likely to be $1$ (i.e. high $A_j$) then this pulls up the probability that $i$ will be 1 (i.e. raises $A_i$).

Flipping all variables, 
$$1-q_i \geq 1/[1+\exp(\h+W_i)/U_i]$$ where $U_i=\prod_{j \in \N(i)} U_{ij}$ with $$e^{-W_{ij}} \geq U_{ij}:=1+ \frac{\al B_j}{1+\al (1-A_i) (1-B_j)}.$$ 
It is also possible to write this as $$ \sigma(\h + \log L_i) \leq \s \leq \sigma(\h + W_i -\log U_i). $$

This establishes a message passing type of algorithm for iteratively improving the bounds $\{A_i,B_i\}$. Repeat until convergence: 
\begin{align*}
\text{  new } A_i &\leftarrow \left(1+\exp(-\h)/L_i \right)^{-1} \\
\text{  new } B_i &\leftarrow \left (1+\exp(\h+W_i)/U_i \right)^{-1}\\
\text{  recompute } &L_i, U_i \text{ using new } A_i, B_i.
\end{align*}

\begin{lemma} \label{lem:mono}
At every iteration, all of $A_i, B_i, L_{ij}, U_{ij}$ monotonically increase.
\end{lemma}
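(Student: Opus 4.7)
The strategy is an induction on the iteration index. The key observation is that the update is built from two building blocks that are each (weakly) monotone in their relevant inputs, so that non-decreases propagate from one round to the next.

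First I would verify the ``inner'' monotonicities. Writing $D_L=1+\al(1-B_i)(1-A_j)$, a direct differentiation of $L_{ij}=1+\al A_j/D_L$ gives
\begin{align*}
\frac{\partial L_{ij}}{\partial A_j} &= \frac{\al D_L + \al^2 A_j(1-B_i)}{D_L^2}\ge 0,\\
\frac{\partial L_{ij}}{\partial B_i} &= \frac{\al^2 A_j(1-A_j)}{D_L^2}\ge 0,
\end{align*}
so $L_{ij}$ is non-decreasing in $A_j$ and in $B_i$, using only $\al\ge 0$ and $A_j,B_i\in[0,1]$. The symmetric calculation shows $U_{ij}$ is non-decreasing in $B_j$ and in $A_i$. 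Next I would observe that the outer sigmoid-style maps $L_i\mapsto (1+e^{-\h}/L_i)^{-1}$ and $U_i\mapsto (1+e^{\h+W_i}/U_i)^{-1}$ are strictly increasing on $L_i,U_i>0$, and that $L_i=\prod_j L_{ij}$, $U_i=\prod_j U_{ij}$ are non-decreasing in each factor because every $L_{ij},U_{ij}\ge 1$ (since $\al\ge 0$).

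With these ingredients in hand, the induction is almost automatic. Assume that between two successive iterations none of the $A_i,B_i$ has decreased. Then by the inner monotonicity $L_{ij}$ and $U_{ij}$ (which depend only on $A_i,A_j,B_i,B_j$ and the fixed parameters) do not decrease; hence $L_i,U_i$ do not decrease; hence the next updates for $A_i,B_i$ do not decrease either, which closes the induction and also gives monotonicity of $L_{ij},U_{ij}$ themselves.

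The only part that requires a separate argument is the base case, i.e.\ the very first update. Initially $A_i=\sigma(\h)$ and $B_i=1-\sigma(\h+W_i)$, and $L_{ij},U_{ij}\ge 1$ (because every $\al\ge 0$), so $L_i,U_i\ge 1$. Then
\begin{equation*}
\text{new }A_i=\frac{1}{1+e^{-\h}/L_i}\ge \frac{1}{1+e^{-\h}}=\sigma(\h)=\text{old }A_i,
\end{equation*}
and similarly $\text{new }B_i\ge 1-\sigma(\h+W_i)=\text{old }B_i$. This starts the induction, completing the proof. The only mildly delicate point I anticipate is bookkeeping around the update order: all new $A_i,B_i$ are computed from the previous $L_i,U_i$ before any $L_{ij},U_{ij}$ are refreshed, so one must be careful that the induction hypothesis is applied to the quantities used in each step, but no new inequality is needed beyond the two partial-derivative computations above.
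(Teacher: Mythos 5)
Your proposal is correct and takes essentially the same route as the paper: the paper's (much terser) proof likewise rests on the observation that all the dependencies ($L_{ij},U_{ij}$ on $A_j,B_i$, the products $L_i,U_i$, and the outer update maps) are monotonically increasing in their inputs, with the base case supplied by the fact that each $L_{ij},U_{ij}>1$ so the first iteration already yields an increase. You merely make these monotonicities explicit via the partial-derivative computations and spell out the induction, which is a faithful elaboration rather than a different argument.
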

\begin{proof}
All of the dependencies are monotonically increasing on all inputs. The first iteration yields an increase since each $L_{ij}, U_{ij} > 1$.
\end{proof}

Since $A_i+B_i\leq 1$, each is bounded above and we achieve monotonic convergence. Combining this with the main global optimization approach can dramatically reduce the range of values that need be considered, leading to significant time savings. Convergence is rapid even for large, densely connected graphs. Each iteration takes $O(|\cal{E}|)$ time; a good heuristic is to run for up to 20 iterations, terminating early if all parameters improve by less than a threshold value. This adds negligible time to the global optimization.


This procedure alone can produce impressive results. For example, running on a $100$-node graph with independent random edge probability $0.04$ (hence average degree $4$), each $W_{ij}$ and $\h$ drawn randomly from Uniform $[0,1]$ and then adjusting $\h \leftarrow \h - \sum_{j \in \N(i)} W_{ij}/2$ in order to be unbiased, convergence takes about 11 iterations yielding final average bracket width of $0.05$ after starting with average bracket width of $0.40$. Greater connectivity, higher edge strengths and smaller individual node potentials make the problem more challenging and may widen the returned final brackets significantly.

\subsection{BBP for general models}

A repulsive edge $(i,j)$ may always be flipped to associative by flipping variable $j$, which flips its Bethe bounds $A_j \leftrightarrow B_j$. Using Theorem \ref{thm:qsand2} we can extend the analysis above to run BBP on any model, see Algorithm 1
. Performance in terms of convergence speed and final bracket width is similar for associative and non-associative models.
\begin{algorithm}
\caption{BBP for a general binary pairwise model}
\label{alg:gbbp}
\begin{algorithmic}
	\STATE \COMMENT{Initialize}
	\FORALL {$i \in \mathcal{V}$}
		\STATE $W_i=\sum_{j \in \N(i): W_{ij}>0} W_{ij}$,
	\STATE $V_i=-\sum_{j \in \N(i): W_{ij}<0} W_{ij}$,
	\STATE $A_i=\sigma(\theta_i-V_i)$, $B_i=1-\sigma(\theta_i+W_i)$ 
	\ENDFOR
	\FORALL {$(i,j) \in \mathcal{E}$}
		\STATE $\alpha_{ij} =\exp (|W_{ij}|) -1$
	\ENDFOR	
	\REPEAT
		\FORALL {$i \in \mathcal{V}$}
			\STATE $L_i=1$, $U_i=1$ \COMMENT{Initialize for this pass}
			\FORALL {$j \in \N(i)$}
				\IF {$W_{ij}>0$} 
					\STATE \COMMENT{Associative edge}
					\STATE $L_i *=1+ \frac{\al A_j} {1+\al (1-B_i) (1-A_j)}$
					\STATE $U_i *=1+ \frac{\al B_j} {1+\al (1-A_i) (1-B_j)}$
				\ELSE 
					\STATE \COMMENT{Repulsive edge}
					\STATE $L_i *=1+ \frac{\al B_j} {1+\al (1-B_i) (1-B_j)}$
					\STATE $U_i *=1+ \frac{\al A_j} {1+\al (1-A_i) (1-A_j)}$
				\ENDIF
			\ENDFOR
			\STATE $A_i = 1 / (1+\exp(-\h + V_i) / L_i)$
			\STATE $B_i = 1 / (1+\exp(\h + W_i) / U_i)$
		\ENDFOR	
	\UNTIL {All $A_i$,$B_i$ changed by $<$ THRESH \textbf{or} run MAXITER times}
	\STATE \COMMENT{Suggested THRESH$=0.002$, MAXITER$=20$}
\end{algorithmic}
\end{algorithm}
	
%
%
%

\section{Higher derivatives \& submodularity}

We first derive a novel result for the second derivatives of an edge which will be crucial later for bounding the error of the discretized global optimum and also will allow us to show that  the discretized multi-label problem is submodular.

\subsection{Second derivatives for each edge}

\begin{theorem}\label{thm:2nderiv}
For any edge $(i,j)$, for any $\al$, writing $f=f_{ij}$ and $\mu_{ab}=\mu_{ij}(a,b)$ from \eqref{eq:mu},
\begin{align*}
\frac{\partial^2f}{\partial q_i^2} &= \frac{1}{T_{ij}} q_j(1-q_j) \\
\frac{\partial^2f}{\partial q_i \partial q_j} = \frac{\partial^2f}{\partial q_j \partial q_i} &= \frac{1}{T_{ij}} (\mu_{01}\mu_{10}-\mu_{00}\mu_{11}) \\
\frac{\partial^2f}{\partial q_j^2} &= \frac{1}{T_{ij}} q_i(1-q_i)
\end{align*} 
where $T_{ij}= \s q_j(1-\s)(1-q_j) -(\x-\s q_j)^2 \geq 0$ with equality only for $q_i$ or $q_j \in \{0,1\}$. Further $\mu_{01}\mu_{10}-\mu_{00}\mu_{11} = \s q_j - \x$ and has the sign of $-\al$.
\end{theorem}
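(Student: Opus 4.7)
The plan is to exploit the envelope theorem, which sidesteps having to differentiate the quadratic \eqref{eq:xi2} implicitly. Since $\x(q_i,q_j)$ was obtained in the first place by minimizing $f$ over $\xi$ at fixed $(q_i,q_j)$, I introduce the unconstrained auxiliary
\[
g(q_i,q_j,\xi)\;=\;-\W\,\xi+\sum_{a,b}\mu_{ab}\log\mu_{ab},
\]
with $\mu_{ab}$ read off from \eqref{eq:mu} but now treating $\xi$ as an independent third argument. Then $f_{ij}(q_i,q_j)=g(q_i,q_j,\x)$ with $g_\xi=0$ at $\xi=\x$, so the standard envelope calculation gives
\[
\frac{\partial^2 f}{\partial q_i\,\partial q_j}\;=\;g_{q_iq_j}-\frac{g_{\xi q_i}\,g_{\xi q_j}}{g_{\xi\xi}},
\]
and analogously for the pure second partials.

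The six relevant second partials of $g$ are immediate: using $\partial_\mu(\mu\log\mu)=1+\log\mu$ together with the fact that every $\partial_x\mu_{ab}$ for $x\in\{q_i,q_j,\xi\}$ is a constant in $\{-1,0,1\}$, each second partial collapses to a signed sum of $1/\mu_{ab}$. Writing $a,b,c,d$ for $1/\mu_{00},1/\mu_{01},1/\mu_{10},1/\mu_{11}$, I expect $g_{\xi\xi}=a+b+c+d$, $g_{q_iq_i}=a+c$, $g_{q_jq_j}=a+b$, $g_{q_iq_j}=a$, $g_{\xi q_i}=-(a+c)$, $g_{\xi q_j}=-(a+b)$. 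Substituting into the envelope formula yields $\partial^2 f/\partial q_i^2=(a+c)(b+d)/(a+b+c+d)$ and $\partial^2 f/\partial q_i\partial q_j=(ad-bc)/(a+b+c+d)$.

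Next I would clear denominators using the linear identities $\mu_{00}+\mu_{10}=1-q_j$, $\mu_{01}+\mu_{11}=q_j$ (and the two symmetric ones), which give $a+c=(1-q_j)/(\mu_{00}\mu_{10})$, $b+d=q_j/(\mu_{01}\mu_{11})$, etc. A short calculation produces the common denominator
\[
T_{ij}=\mu_{00}\mu_{01}\mu_{10}+\mu_{00}\mu_{01}\mu_{11}+\mu_{00}\mu_{10}\mu_{11}+\mu_{01}\mu_{10}\mu_{11},
\]
with numerators $q_j(1-q_j)$, $q_i(1-q_i)$, and $\mu_{01}\mu_{10}-\mu_{00}\mu_{11}$ respectively. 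A direct expansion from \eqref{eq:mu} gives $\mu_{01}\mu_{10}-\mu_{00}\mu_{11}=q_iq_j-\x$, whose sign is $-\al$ by Lemma \ref{lem:newst}.

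Finally I match $T_{ij}$ to the claimed closed form. Regrouping via $q_j=\mu_{01}+\mu_{11}$ and $1-q_j=\mu_{00}+\mu_{10}$ yields $T_{ij}=q_j\mu_{00}\mu_{10}+(1-q_j)\mu_{01}\mu_{11}$; expanding this in $(q_i,q_j,\x)$ and comparing with a direct expansion of $q_iq_j(1-q_i)(1-q_j)-(\x-q_iq_j)^2$ shows the two coincide. The factored form makes $T_{ij}\ge 0$ immediate, and Lemma \ref{lem:mugz} upgrades this to strict positivity unless $q_i$ or $q_j\in\{0,1\}$. The only real obstacle is keeping the algebraic identification of $T_{ij}$ tight; the envelope reduction is what prevents the calculation from ballooning via chain-rule derivatives of the implicit $\x(q_i,q_j)$.
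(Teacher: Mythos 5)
Your proposal is correct, and it reaches the theorem by a genuinely different route than the paper. You eliminate the single free parameter $\xi$ in the primal: writing $f(q_i,q_j)=\min_\xi g(q_i,q_j,\xi)$ with the $\mu_{ab}$ affine in $(q_i,q_j,\xi)$, the Hessian of the value function is the one-dimensional Schur complement $g_{q q}-g_{\xi q}g_{\xi q}^{\top}/g_{\xi\xi}$, and since each $\partial\mu_{ab}\in\{-1,0,1\}$ every second partial of $g$ is a signed sum of reciprocals $1/\mu_{ab}$; I checked your table ($g_{\xi\xi}=a{+}b{+}c{+}d$, $g_{q_iq_i}=a{+}c$, $g_{q_iq_j}=a$, $g_{\xi q_i}=-(a{+}c)$, etc.), the resulting quotients $(a{+}c)(b{+}d)/(a{+}b{+}c{+}d)$ and $(ad{-}bc)/(a{+}b{+}c{+}d)$, and the clearing of denominators to the common $T_{ij}=\sum$ of triple products $=q_j\mu_{00}\mu_{10}+(1-q_j)\mu_{01}\mu_{11}=q_iq_j(1-q_i)(1-q_j)-(\xi_{ij}-q_iq_j)^2$, as well as $\mu_{01}\mu_{10}-\mu_{00}\mu_{11}=q_iq_j-\xi_{ij}$; all of these are right. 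The paper instead treats $f$ as a constrained minimization over the full pairwise table, passes to the Lagrangian dual with three multipliers, and obtains the Hessian as $-C^{-1}$ where $C$ is the dual's second-derivative matrix, extracting entries by Cramer's rule and identifying $-\det C=T_{ij}$. The two computations are two faces of the same value-function Hessian, but yours is more elementary: it avoids invoking strong duality and a $3\times 3$ determinant, at the cost of relying on the explicit $\xi$-parameterization of the local polytope (which exists only in the binary pairwise case, whereas the dual computation generalizes more readily). One small point you should make explicit: the envelope step needs $g_\xi=0$ at an \emph{interior} minimizer and $g_{\xi\xi}>0$, which for $q_i,q_j\in(0,1)$ (either sign of $\alpha_{ij}$) follows from Lemma \ref{lem:mugz} since all $\mu_{ab}>0$ there; this mirrors the interiority the paper's dual argument also uses, and at the boundary both presentations degenerate exactly as the statement $T_{ij}=0$ indicates. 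Also, to get the strict sign of $q_iq_j-\xi_{ij}$ when $\alpha_{ij}\neq 0$ one can note $\mu_{00}\mu_{11}=(1+\alpha_{ij})\mu_{01}\mu_{10}$ (the stationarity condition itself), which sharpens the non-strict inequality of Lemma \ref{lem:newst}.
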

\begin{proof}
We begin with the same approach as \cite{Kor12} but extend the analysis and derive stronger results.

For notational convenience add a third pseudo-dimension restricted to the value $1$. Let $\textbf{y}=(y_1,y_2,y_3)$ be the vector with components $y_1=x_i$, $y_2=x_j$ and $y_3=1$ where $x_i, x_j \in \mathbb{B}$. Define $\pi(\textbf{y})=\mu_{ij}(x_i,x_j)$, and $\phi(\textbf{y})=\W$ if $\textbf{y}=(1,1,1)$ or $\phi(\textbf{y})=0$ otherwise. Let $\textbf{r}=(q_i,q_j,1)$. Define function $h$ used in entropy calculations as $h(z)=-z \log z$. 
 
Consider \eqref{eq:f} but instead of solving for $\x$ explicitly, express $f$ as an optimization problem, minimizing free energy subject to local consistency and normalization constraints in order to use techniques from convex optimization. We have $f(q_i,q_j)=g(\textbf{r})$ where 
\begin{align}\label{eq:g}
g(\textbf{r}) & = & \min_\pi 
\sum_{\bf y} \big( -\phi(\textbf{y})\pi(\textbf{y})-h(\pi(\textbf{y})) \big) \nonumber \\
&& {\rm s.t.} \; \sum_{\textbf{y}:y_k=1} \pi(\textbf{y})=r_k \:\: k=1,2,3.
\end{align}

The Lagrangian can be written as 
\begin{align*}
L_\textbf{r}(\pi, \boldsymbol{\lambda} )=\sum_\textbf{y} [ (-\phi(\textbf{y})-\langle \textbf{y},\bl \rangle )\pi(\textbf{y}) - h(\pi(\textbf{y}))  
]
+ \langle \textbf{r},\boldsymbol{\lambda} \rangle
\end{align*}
and its derivative is
\begin{align*}
\frac{\partial L_\textbf{r}(\pi, \boldsymbol{\lambda} ) } {\partial \pi} =  -\phi(\textbf{y})-\langle \textbf{y},\bl \rangle +1 + \log \pi
\end{align*}
which yields a minimum at 
\begin{equation}\label{eq:pi}
\pi_{\boldsymbol{\lambda}} (\textbf{y}) = \exp(\phi(\textbf{y}) + \langle \textbf{y},\bl \rangle -1).
\end{equation}
Since the minimization problem in (14) is convex and satisfies the weak Slater's condition (the constraints are affine), strong duality applies and $g(\textbf{r})=\max_{\bl} G(\textbf{r},\bl) = G(\textbf{r},\bl^*(\textbf{r}))$ where the dual is simply
\begin{equation}\label{eq:G}
G(\textbf{r},\bl)=\min_\pi L_\textbf{r}(\pi, \boldsymbol{\lambda} ) = -\sum_{\textbf{y}} \pi_{\boldsymbol{\lambda}} (\textbf{y})  + \langle \textbf{r},\bl \rangle.
\end{equation}

Let $D_k(\textbf{r},\bl) = \frac{\partial G(\textbf{r},\bl)}{\partial \lambda_k}$ then $D_k(\textbf{r},\bl^*)=0, \; k=1,2,3$.

Hence $\frac{\partial g}{\partial r_k} = \frac{\partial G}{\partial r_k} = \lambda_k$ using \eqref{eq:G}. Focusing on our goal of obtaining second derivatives, we consider $\frac{\partial^2 g}{\partial r_l \partial r_k} = \frac{\partial \lambda_k}{\partial r_l}$ which we shall express in terms of $C_{kl}:=\frac{\partial^2 G}{\partial \lambda_l \partial \lambda_k}=\frac{\partial D_k}{\partial \lambda_l}$.

Differentiating $D_k(\textbf{r},\bl^*)=0$ with respect to $r_l$,
\begin{equation*}
0=\frac{\partial D_k(\textbf{r},\bl^*)} {\partial r_l} = \frac{\partial D_k}{\partial r_l} + 
\sum_{p=1}^3 \frac{\partial D_k}{\partial \lambda_p} \frac{\partial \lambda_p}{\partial r_l} \quad k,l=1,2,3.
\end{equation*}
Considering \eqref{eq:G}, $\frac{\partial D_k}{\partial r_l} = \frac{\partial^2 G}{\partial r_l \partial \lambda_k}=\delta_{kl}$ hence $0=\delta_{kl} + \sum_p C_{kp} \frac{\partial^2 g}{\partial r_l \partial r_p} $. Thus $\frac{\partial^2 g}{\partial r_l \partial r_k} = -[C^{-1}]_{kl}$. Using its definition and \eqref{eq:G}, we have
\begin{align*}
C_{kl} &= \frac{\partial^2 G}{\partial \lambda_l \partial \lambda_k}
= \frac{\partial}{\partial \lambda_l} \Big(-\sum_{\textbf{y}} y_k  \pi_{\boldsymbol{\lambda}} (\textbf{y})  +r_k \Big) \\
&= -\sum_{\textbf{y}} y_k y_l \pi_{\boldsymbol{\lambda}} (\textbf{y}) 
= -\sum_{\textbf{y}:y_k=y_l=1} \pi_{\boldsymbol{\lambda}} (\textbf{y}).
\end{align*}
Earlier work \cite{Kor12} stopped here, recognizing that $\det C \leq 0$. We more precisely characterize this matrix \\
\begin{equation}\label{eq:C}
C=-\begin{pmatrix} \mu_{10}+\mu_{11} & \mu_{11} & \mu_{10}+\mu_{11} \\
\mu_{11} & \mu_{01}+\mu_{11} & \mu_{01}+\mu_{11} \\
\mu_{10}+\mu_{11} & \mu_{01}+\mu_{11} & 1 \end{pmatrix}\end{equation}
Recall constraints $\mu_{00}+\mu_{01}+\mu_{10}+\mu_{11}=1$, $\mu_{01}+\mu_{11}=q_j$, $\mu_{10}+\mu_{11}=q_i$. Note $C$ is symmetric.

Applying our result above and using Cramer's rule,
$$ \frac{\partial^2f}{\partial q_i^2} = \frac{\partial^2g}{\partial r_1^2} = - \frac{1}{\det C} (\mu_{01}+\mu_{11})(\mu_{00}+\mu_{10}) =  \frac{q_j (1-q_j)}{-\det C}$$
$$\frac{\partial^2f}{\partial q_i \partial q_j} = \frac{\partial^2f}{\partial q_j \partial q_i} = \frac{\partial^2g}{\partial r_1 \partial r_2} = \frac{ (\mu_{01}\mu_{10}-\mu_{00}\mu_{11}) }{-\det C}$$
$$ \frac{\partial^2f}{\partial q_j^2} = \frac{\partial^2g}{\partial r_2^2} = - \frac{1}{\det C}  (\mu_{10}+\mu_{11})(\mu_{00}+\mu_{01}) =  \frac{q_i (1-q_i)}{-\det C}.$$

Using \eqref{eq:C} and simplifying, we obtain $-\det C= \mu_{00}\mu_{10}\mu_{11}+\mu_{10}\mu_{11}\mu_{01}+\mu_{11}\mu_{10}\mu_{00}+\mu_{01}\mu_{00}\mu_{10}$. 
By Lemma \ref{lem:mugz} this is strictly $>0$ unless $q_i$ or $q_j \in \{0,1\}$. Substituting in terms from \eqref{eq:mu} and simplifying establishes $-\det C = T_{ij}$ from the statement of the theorem, and $\mu_{01}\mu_{10}-\mu_{00}\mu_{11} = \s q_j - \x$. The sign follows from Lemma \ref{lem:newst} or observing from \eqref{eq:pi} that $\frac{\mu_{00}\mu_{11}}{\mu_{01}\mu_{10}} = e^{W_{ij}} = \al +1$.
\end{proof}
Note that stronger edge interactions lead through higher $|\al|$ to greater $(\x - \s q_j)^2$ and hence larger second derivatives. 

\subsection{Third derivatives for each edge}

\begin{lemma}[Finite 3rd derivatives]\label{lem:bound3}
For any edge $(i,j)$ with $\al>0$, if $q_i,q_j \in (0,1)$ then all third derivatives exist and are finite.
\end{lemma}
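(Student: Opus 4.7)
The plan is to differentiate once more each of the three closed-form second-derivative expressions from Theorem~\ref{thm:2nderiv}. Each is a rational function of $\s$, $q_j$, and $\x$, with common denominator $T_{ij}=\s q_j(1-\s)(1-q_j)-(\x-\s q_j)^2$ and numerator one of $q_j(1-q_j)$, $\s(1-\s)$, or $\s q_j-\x$. Thus, to conclude that all third derivatives exist and are finite at an interior point $(\s,q_j)\in(0,1)^2$, it suffices to verify (a) that $\x$ is smooth as a function of $(\s,q_j)$ on this open region, and (b) that $T_{ij}$ is strictly positive there.

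Part (b) is immediate from Theorem~\ref{thm:2nderiv}, which already shows $T_{ij}>0$ whenever neither $\s$ nor $q_j$ equals $0$ or $1$.

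For (a), I would apply the implicit function theorem to the quadratic \eqref{eq:xi2}, written as $g(\s,q_j,\x)=\al\x^2-[1+\al(\s+q_j)]\x+(1+\al)\s q_j=0$. The required nondegeneracy is $\partial g/\partial\x=2\al\x-[1+\al(\s+q_j)]\neq 0$ at the selected (lower) root. At either root this quantity equals $\pm\sqrt{\Delta}$, where a short expansion rewrites the discriminant as
\[
\Delta=1+2\al(\s+q_j-2\s q_j)+\al^2(\s-q_j)^2.
\]
Since $\al>0$ and $\s+q_j-2\s q_j\ge 0$ by Lemma~\ref{lem:st}, this gives $\Delta\ge 1$, so $\partial g/\partial\x$ is strictly nonzero on all of $[0,1]^2$. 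Consequently $\x$ is analytic in $(\s,q_j)$ on $(0,1)^2$.

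Combining, every numerator and the denominator $T_{ij}$ of each second-derivative expression is smooth in $(\s,q_j)$ on $(0,1)^2$, and the denominator is strictly positive there; by the quotient rule, the second derivatives are themselves smooth, so their partials—the third derivatives of $f_{ij}$—exist and are finite on $(0,1)^2$. The only real obstacle is the discriminant bound certifying applicability of the implicit function theorem, and this is handled by the one-line computation above. An appealing alternative route, if preferred, is to invoke the optimization formulation \eqref{eq:g}: since Lemma~\ref{lem:mugz} guarantees strictly positive $\mu$-entries on $(0,1)^2$, the KKT system for $\pi_{\bl}$ satisfies the hypotheses of the implicit function theorem, yielding $C^{\infty}$ smoothness of $f_{ij}$ (and hence of all its derivatives) directly.
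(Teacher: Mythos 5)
Your proof is correct and follows essentially the same route as the paper: both reduce, via Theorem~\ref{thm:2nderiv} and strict positivity of $T_{ij}$, to showing that $\x$ is differentiable in $(\s,q_j)$, which the paper does by implicitly differentiating \eqref{eq:xi2} and noting the resulting denominator $1+\al(\s-\x)+\al(q_j-\x)\geq 1$ because $\s-\x$ and $q_j-\x$ are pseudo-marginal entries. Your discriminant bound $\Delta\geq 1$ certifies exactly the same nondegeneracy quantity (at the lower root, $\partial g/\partial\x=-\sqrt{\Delta}=-[1+\al(\s-\x)+\al(q_j-\x)]$), merely established via Lemma~\ref{lem:st} rather than nonnegativity of the pseudo-marginal, so the two arguments coincide in substance.
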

\begin{proof}
Using Theorem \ref{thm:2nderiv} noting $T_{ij}>0$ strictly and considering \eqref{eq:mu}, it is sufficient to show $\frac{\partial \x}{\partial q_k}$ is finite. We may assume $k\in \{i,j\}$ else the derivative is $0$ and by symmetry need only check $\frac{\partial \x}{\partial q_i}$. Differentiating \eqref{eq:xi2}, $$\frac{\partial \x}{\partial q_i} = \frac{\al (q_j-\x) + q_j} {1+\al(\s-\x+q_j-\x)},$$ clearly finite for $\al>0$ since recalling \eqref{eq:mu}, $\s-\x$ and $q_j-\x$ are elements of the pseudo-marginal and hence are non-negative (or use Lemma~\ref{lem:xiub}).
\end{proof}
%

\subsection{Submodularity}

\begin{theorem}
If a binary pairwise MRF is submodular on an edge $(i,j)$, i.e. $\al>0$, then the multi-label discretized MRF for any discretization $\mathcal{D}$ is submodular for that edge. In particular, if the MRF is fully associative/submodular, i.e. $\al>0 \; \forall (i,j) \in \cal{E}$, then the multi-label discretized MRF is fully submodular for any discretization.
\end{theorem}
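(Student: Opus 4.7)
The plan is to reduce submodularity of each pairwise discretized edge cost to the sign of the mixed second partial $\partial^2 f_{ij}/(\partial \s \partial q_j)$, read that sign off from Theorem~\ref{thm:2nderiv}, and then aggregate across edges.

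For a $C^2$ function $f_{ij}$ on the open square $(0,1)^2$, the submodular inequality \eqref{eq:sub} on any sublattice $D_i \times D_j$ reduces to showing, for each $x_1 \leq y_1$ in $D_i$ and $x_2 \leq y_2$ in $D_j$,
$$
f_{ij}(x_1, x_2) + f_{ij}(y_1, y_2) \;\leq\; f_{ij}(x_1, y_2) + f_{ij}(y_1, x_2),
$$
since all other pairs $x,y$ are already comparable under the lattice order and make \eqref{eq:sub} a trivial equality. By the fundamental theorem of calculus this four-point difference equals the double integral of $\partial^2 f_{ij}/(\partial \s \partial q_j)$ over the rectangle $[x_1,y_1]\times[x_2,y_2]$, so it suffices to show the mixed partial is $\leq 0$ pointwise in the interior.

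By Theorem~\ref{thm:2nderiv}, $\partial^2 f_{ij}/(\partial \s \partial q_j) = (\mu_{01}\mu_{10} - \mu_{00}\mu_{11})/T_{ij} = (\s q_j - \x)/T_{ij}$, with $T_{ij} > 0$ strictly on $(0,1)^2$. Lemma~\ref{lem:newst} yields $\al > 0 \Rightarrow \x \geq \s q_j$, so the numerator is $\leq 0$; equivalently, the theorem itself states that this quantity carries the sign of $-\al$. Hence the mixed partial is non-positive throughout $(0,1)^2$, and the desired inequality holds whenever the comparing rectangle lies in the open interior.

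The one subtlety is the boundary: when $\s$ or $q_j$ equals $0$ or $1$ we have $T_{ij}=0$ and the derivative formula is singular, so if the discretization $D_i$ or $D_j$ includes an endpoint, the integral argument does not directly apply to every comparison. I would dispose of this by continuity — $f_{ij}$ extends continuously to the closed square $[0,1]^2$ because $\x$ from \eqref{eq:xi2} is continuous and the Bethe pairwise entropy $S_{ij}$ is continuous with the convention $0\log 0 = 0$ — so a four-point comparison touching $\partial [0,1]^2$ is a limit of interior comparisons and the inequality survives. For the "in particular" statement, the unary contributions in \eqref{eq:F} depend on a single coordinate each and are therefore trivially pairwise-modular, while a sum of submodular functions is submodular; hence fully associative forces the discretized multi-label energy to be submodular on all of $\mathcal{D}$, permitting graph-cut MAP inference as in \cite{SchFla06}. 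I expect this boundary continuity step to be the only nontrivial obstacle, and it is a routine limiting argument.
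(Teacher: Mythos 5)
Your proof is correct, and it rests on the same key fact as the paper's --- the mixed second derivative $\partial^2 f_{ij}/\partial q_i \partial q_j = (q_iq_j - \x)/T_{ij} \le 0$ for $\al > 0$ (Theorem~\ref{thm:2nderiv} combined with Lemma~\ref{lem:newst}), together with continuity of $f_{ij}$ on $[0,1]^2$ to absorb discretization points on the boundary --- but your aggregation step is genuinely simpler than the paper's. The paper establishes the rectangle inequality by subdividing the rectangle into small sub-rectangles, applying Taylor's theorem to second order with the remainder controlled by third derivatives (which is why it needs Lemma~\ref{lem:bound3} and the strict bound of Lemma~\ref{lem:xiblb}, so that the negative cross term strictly dominates on each small piece), and then summing with telescoping cancellation of internal terms; you instead express the four-point difference directly as the double integral of $\partial^2 f_{ij}/\partial q_i\partial q_j$ over the rectangle, so pointwise nonpositivity of the integrand immediately yields the non-strict inequality, which is all that \eqref{eq:sub} requires. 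This bypasses both the third-derivative lemma and the strictness argument at no cost: the regularity you invoke ($f_{ij}$ being $C^2$ with continuous mixed partial on compact rectangles inside $(0,1)^2$, and continuity of $f_{ij}$ up to the closed square) is already assumed, indeed exceeded, by the paper's own proof. Your reduction of \eqref{eq:sub} to the rectangle inequality for $x_1\le y_1$, $x_2\le y_2$ (comparable pairs being trivial) and your treatment of boundary discretization points as limits of interior rectangles are both sound, so I see no gap.
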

\begin{proof}
For any edge $(i,j)$, let $f$ be the pairwise function $f_{ij}$ from \eqref{eq:f} and note the submodularity requirement \eqref{eq:sub}. 
Let $x=(x_1,x_2)$, $y=(y_1,y_2)$ be any points in $[0,1]^2$. Define $s(x,y)=(s_1,s_2)=(\min(x_1,y_1),\min(x_2,y_2))$, and $t(x,y)=(t_1,t_2)=(\max(x_1,y_1),\max(x_2,y_2))$. Let $g(x,y)=f(s_1,s_2)+f(t_1,t_2) - f(s_1,t_2)-f(s_2,t_1)$, call this the submodularity of the rectangle defined by $x,y$. We must show $g(x,y)\leq 0$. Note $f$ is continuous in $[0,1]^2$ hence so also is $g$. We shall show that $\forall x,y \in (0,1)^2, \, g(x,y)<0$ then the result follows by continuity.

Assume $x,y \in (0,1)^2$. Consider derivatives of $f$ in the compact set $R=[s_1,t_1] \times [s_2,t_2]$. Using \eqref{eq:1stderiv} and Lemma \ref{lem:mugz}, first derivatives exist and are bounded. By Theorem \ref{thm:2nderiv} and Lemma \ref{lem:bound3} the same holds for second and third derivatives. Further, Theorem \ref{thm:2nderiv} and Lemma \ref{lem:xiblb} show that $\frac{\partial^2f}{\partial q_i \partial q_j} = \frac{\partial^2f}{\partial q_j \partial q_i} < 0$.

If a rectangle is sliced fully along each dimension so as to be subdivided into sub-rectangles then summing the submodularities of all the sub-rectangles, internal terms cancel and we obtain the submodularity of the original rectangle.

Hence there exists an $\epsilon$ such that if we subdivide the rectangle defined  by $x,y$ into sufficiently small sub-rectangles with sides $< \epsilon$ and apply Taylor's theorem up to second order with the remainder expressed in terms of the third derivative evaluated in the interval, then the second order terms dominate and the submodularity of each small sub-rectangle $<0$. Summing over all sub-rectangles provides the result.
\end{proof}

\subsection{Second derivatives for singleton terms}

Let $f_i(q_i)$ be the singleton terms from \eqref{eq:F} for  $X_i$. The only non-zero derivatives are with respect to $q_i$.
\begin{align*}
f_i(q_i) &=  -\h q_i + (z_i-1) S_i(q_i) \\
\frac{\partial f_i}{\partial q_i} &= -\h - (z_i-1) [ \log q_i - \log (1-q_i)] \\
\frac{\partial^2 f_i}{\partial q_i^2} &= -(z_i-1) \frac{1}{q_i(1-q_i)} \leq 0 \text{ for a connected graph.}
\end{align*}
\begin{equation}\label{eq:2dsingle}
\text{Hence,} -\frac{z_i-1} {\eta_i(1-\eta_i)} \leq \frac{\partial^2 f}{\partial q_i^2} \leq 0 , \; \eta_i = \min(A_i,B_i).
\end{equation}

\section{Approximating the Global Optimum for an Associative Model}\label{sec:final}

We now assemble earlier results to form the complete matrix $H$ of second derivatives of the Bethe free energy $F$ and use this to bound the error between the discretized optimum and the global Bethe optimum. In this section we assume the model is associative.  Define the \textit{Bethe box} to be the orthotope (sometimes called a hyper-cuboid) given by $q_i \in [A_i,1-B_i] \; \forall i \in \cal{V}$.

At the optimum (or any stationary point), all first derivatives are zero. If we choose our discretization mesh $\cal{D}$ to be sufficiently fine then we can be sure that a point in the mesh is within distance $\delta$ of a true optimum. In particular, if we choose each $D_i$ so that in the $q_i$ dimension every point in $[A_i,1-B_i]$ is within distance $\gamma$, then $\delta^2 \leq n \gamma^2$. 

Using a first order Taylor expansion of $F$ around a true optimum, with the remainder expressed in terms of the second derivative, the error of our discretized optimum versus the true Bethe optimum  $\leq \frac{1}{2}\Lambda \delta^2$, where $\Lambda$ is the largest eigenvalue of $H$ evaluated at some intermediate point, which we shall bound.  Observe that the Bethe optimum (any stationary point) must lie within the Bethe box, and hence we may assume also that all mesh points are inside since it would be pointless to check outside it. We shall bound the largest eigenvalue of $H$ anywhere within the Bethe box.\footnote{This value can also be used to find an approximately stationary point \cite{Shin12} if required by considering the Taylor expansion of $F'$ around a stationary point.} 

Note that our error is one-sided since our discretized optimum can never be better than the true optimum. This may facilitate further analysis to find a better approximation by using points in the neighborhood to estimate the likely error. 

\subsection{Complete matrix of second derivatives}

 Theorem \ref{thm:2nderiv} and \eqref{eq:2dsingle} provide all the terms. 

\begin{lemma}\label{lem:diag}
All entries on the main diagonal of $H$ are strictly positive, all others are $\leq 0$.
\end{lemma}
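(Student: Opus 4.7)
The plan is to exploit the additive decomposition $F = \sum_{(i,j)\in\ce} f_{ij}(q_i,q_j) + \sum_i f_i(q_i)$, which makes $H$ sparse: each diagonal entry $H_{ii}$ collects one singleton contribution and one contribution per incident edge, while each off-diagonal $H_{ij}$ (with $i\ne j$) is either zero (no edge) or equal to the single mixed partial $\partial^2 f_{ij}/\partial q_i \partial q_j$.

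First I would dispose of the off-diagonal entries. When $(i,j)\notin\ce$ there is nothing to check. When $(i,j)\in\ce$, associativity gives $\alpha_{ij}>0$, and Theorem~\ref{thm:2nderiv} together with Lemma~\ref{lem:newst} (which gives $\xi_{ij}\ge q_iq_j$) immediately yield
$$\frac{\partial^2 f_{ij}}{\partial q_i \partial q_j} \;=\; \frac{q_iq_j - \xi_{ij}}{T_{ij}} \;\le\; 0.$$

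For the diagonal, the strategy is to show the positive edge contributions dominate the (at most $z_i-1$ worth of) negative singleton contribution via a clean telescoping. By Theorem~\ref{thm:2nderiv} each edge contributes $q_j(1-q_j)/T_{ij}$, and by the very definition
$$T_{ij} \;=\; q_i(1-q_i)q_j(1-q_j) - (\xi_{ij} - q_iq_j)^2 \;\le\; q_i(1-q_i)q_j(1-q_j),$$
so every edge term is bounded below by $1/[q_i(1-q_i)]$. Summing over the $z_i$ neighbors and combining with the singleton term $-(z_i-1)/[q_i(1-q_i)]$ from \eqref{eq:2dsingle} leaves at least $1/[q_i(1-q_i)] > 0$. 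Note this works uniformly in $z_i\ge 1$: for $z_i=1$ the singleton contribution vanishes and one edge term suffices, while for $z_i\ge 2$ the cancellation $z_i - (z_i-1) = 1$ is what saves us.

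The main obstacle is mild: ensuring the denominators $q_i(1-q_i)$ and $T_{ij}$ are strictly positive where we evaluate $H$. Both $A_i = \sigma(\theta_i - V_i)$ and $1-B_i = \sigma(\theta_i + W_i)$ lie strictly in $(0,1)$, so the Bethe box sits inside $(0,1)^n$ and $q_i(1-q_i)>0$ throughout. Strict positivity of $T_{ij}$ on the box then follows from Lemma~\ref{lem:mugz} (equivalently, the factorization $-\det C = T_{ij}$ at the end of the proof of Theorem~\ref{thm:2nderiv}). Everything else is bookkeeping on results already established.
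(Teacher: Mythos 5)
Your proof is correct and follows essentially the same route as the paper: off-diagonal entries handled via Theorem~\ref{thm:2nderiv} and the sign of $q_iq_j-\xi_{ij}$, and the diagonal via the bound $T_{ij}\le q_i(1-q_i)q_j(1-q_j)$ so that each edge term contributes at least $1/[q_i(1-q_i)]$, cancelling the $-(z_i-1)/[q_i(1-q_i)]$ singleton term to leave $H_{ii}\ge 1/[q_i(1-q_i)]>0$. Your added care about strict positivity of the denominators inside the Bethe box is a reasonable (implicit in the paper) piece of bookkeeping, not a different argument.
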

\begin{proof}
Apply Theorem \ref{thm:2nderiv}. If $(i,j) \in \cal{E}$ then $H_{ij} = (q_iq_j-\x) / T_{ij} \leq 0$. If $(i,j) \notin \cal{E}$, $i \neq j$ then $H_{ij}=0$. On the main diagonal
\begin{align}\label{eq:Hii}
H_{ii} &= - \frac{z_i-1}{q_i (1-q_i)} + \sum_{j \in \N(i)} \frac{q_j(1-q_j)}{T_{ij}} \\
&\geq \frac{1-z_i}{q_i (1-q_i)} + \!\!\!\! \sum_{j \in \N(i)} \! \frac{q_j(1-q_j)}{q_iq_j(1-q_i)(1-q_j)} = \frac{1}{q_i(1-q_i)}.
\nonumber
\end{align}
\end{proof}
\subsection{Max eigenvalue \& complexity bound}

We have shown that $H$ is a real symmetric matrix with strictly positive main diagonal and all other entries $\leq 0$. To further bound the entries we derive a lower bound for $T_{ij}$ at any point in the Bethe box. Define $K_{ij}=\eta_i \eta_j (1-\eta_i)(1-\eta_j) \frac{2\al+1}{(\al+1)^2}$. All terms are known from the data prior to the discrete optimization.

\begin{lemma}\label{lem:Tlb}
At any point in the Bethe box, $T_{ij} \geq K_{ij}$.
\end{lemma}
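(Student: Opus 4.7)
The plan is to reduce $T_{ij} = q_iq_j(1-q_i)(1-q_j) - (\xi_{ij}-q_iq_j)^2$ to a clean multiple of $q_iq_j(1-q_i)(1-q_j)$, and then use the Bethe box to replace the $q$'s by $\eta$'s. Since we are in the associative section, I may assume $\alpha_{ij} > 0$, so Lemma \ref{lem:newst} guarantees $\xi_{ij} - q_iq_j \geq 0$ and Lemma \ref{lem:xiub} gives the sharp upper bound $\xi_{ij} - q_iq_j \leq \alpha_{ij} m(1-M)/(1+\alpha_{ij})$, writing $m=\min(q_i,q_j)$, $M=\max(q_i,q_j)$.

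First I would square the bound from Lemma \ref{lem:xiub} and observe the elementary inequality $m^2(1-M)^2 \leq mM(1-m)(1-M) = q_iq_j(1-q_i)(1-q_j)$, which holds because $m \leq M$ implies $m(1-M) \leq M(1-m)$. This yields
\begin{equation*}
(\xi_{ij}-q_iq_j)^2 \;\leq\; \frac{\alpha_{ij}^2}{(1+\alpha_{ij})^2}\, q_iq_j(1-q_i)(1-q_j),
\end{equation*}
so that
\begin{equation*}
T_{ij} \;\geq\; q_iq_j(1-q_i)(1-q_j)\left(1 - \frac{\alpha_{ij}^2}{(1+\alpha_{ij})^2}\right) = q_iq_j(1-q_i)(1-q_j)\,\frac{2\alpha_{ij}+1}{(1+\alpha_{ij})^2}.
\end{equation*}

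Next I would pass from $q$'s to $\eta$'s using the Bethe box. By definition $\eta_k = \min(A_k, B_k)$, so $\eta_k \leq A_k \leq q_k \leq 1-B_k \leq 1-\eta_k$, i.e., $q_k \in [\eta_k, 1-\eta_k]$. Since $x(1-x)$ is concave and symmetric about $1/2$, it attains its minimum over $[\eta_k, 1-\eta_k]$ at the endpoints, giving $q_k(1-q_k) \geq \eta_k(1-\eta_k)$. Multiplying the two such inequalities for $k=i,j$ yields $q_iq_j(1-q_i)(1-q_j) \geq \eta_i\eta_j(1-\eta_i)(1-\eta_j)$, and substituting this into the previous display gives exactly $T_{ij} \geq K_{ij}$.

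There is no real obstacle here; the only slightly delicate step is noticing the comparison $m^2(1-M)^2 \leq mM(1-m)(1-M)$, which allows the squared bound from Lemma \ref{lem:xiub} to be absorbed as a factor of $q_iq_j(1-q_i)(1-q_j)$ rather than being handled separately. Everything else is either direct algebra or an appeal to the concavity of $x(1-x)$ together with the definition of the Bethe box.
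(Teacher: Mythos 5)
Your proposal is correct and follows essentially the same route as the paper: bound $(\xi_{ij}-q_iq_j)^2$ via the final inequality of Lemma \ref{lem:xiub}, absorb it into a factor of $q_iq_j(1-q_i)(1-q_j)$ using $m(1-M)\leq M(1-m)$, and then pass to $\eta$'s via the Bethe box. You merely spell out explicitly the elementary comparison and the endpoint argument for $x(1-x)$ that the paper leaves implicit.
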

\begin{proof}
Using Theorem \ref{thm:2nderiv} and Lemma \ref{lem:xiub}, 
\begin{align*}
T_{ij} &\geq q_i q_j (1-q_i)(1-q_j) - \Big(\frac{\al m (1-M)}{1+\al}\Big)^2 \\
 &\geq  q_i q_j (1-q_i)(1-q_j) \Big[1-\Big(\frac{\al}{1+\al} \Big)^2 \Big].
\end{align*} \end{proof}

\begin{theorem}\label{thm:ab}
At any point in the Bethe box, each entry $H_{ij}$ satisfies $-a \leq H_{ij} \leq b$ where 
\begin{align*}
a &=\frac{1}{4} \max_{(i,j)\in \cal{E}} \frac{\al}{\al+1} \frac{1}{K_{ij}} \\
 &= \max_{(i,j) \in \cal{E}} \frac{\al(\al+1)}{4(2\al+1) \eta_i \eta_j (1-\eta_i)(1-\eta_j)}, \nonumber\\
 b &= \max_{i \in \mathcal{V}} \frac{1}{\eta_i(1-\eta_i)} \Big( 1-z_i + \sum_{j \in \N(i)} \frac{(\al+1)^2}{2\al+1} \Big).
\end{align*}
\end{theorem}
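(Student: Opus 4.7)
The plan is to split the matrix entries into three cases: (a) off-diagonal non-edge entries $H_{ij}=0$ for $(i,j)\notin\mathcal{E}$, which trivially lie in $[-a,b]$; (b) off-diagonal edge entries $(i,j)\in\mathcal{E}$, which are $\leq 0$ by Lemma \ref{lem:diag} so only the lower bound $-a$ needs to be shown; and (c) diagonal entries, for which the proof of Lemma \ref{lem:diag} already gives $H_{ii}\geq 1/[q_i(1-q_i)]>0\geq -a$, leaving only the upper bound $b$ to establish.

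For case (b) I would combine three ingredients already at hand: the expression $H_{ij}=(q_iq_j-\x)/T_{ij}$ from Theorem \ref{thm:2nderiv}; the numerator bound $\x-q_iq_j\leq \al m(1-M)/(\al+1)$ from Lemma \ref{lem:xiub}; and the denominator bound $T_{ij}\geq K_{ij}$ from Lemma \ref{lem:Tlb}. The one extra fact needed is the elementary maximum $m(1-M)\leq 1/4$, attained when $q_i=q_j=1/2$. These combine to give $-H_{ij}\leq \al/[4(\al+1)K_{ij}]$; taking the max over edges yields the first expression for $a$, and substituting the definition of $K_{ij}$ rearranges into the second.

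For case (c) I would start from \eqref{eq:Hii} and upper bound each summand $q_j(1-q_j)/T_{ij}$ using the intermediate inequality $T_{ij}\geq q_iq_j(1-q_i)(1-q_j)(2\al+1)/(\al+1)^2$ proved en route to Lemma \ref{lem:Tlb}. This conveniently produces a common factor $1/[q_i(1-q_i)]$ and gives
\[
H_{ii}\leq \frac{1}{q_i(1-q_i)}\bigg(1-z_i+\sum_{j\in\N(i)}\frac{(\al+1)^2}{2\al+1}\bigg).
\]
Since $(\al+1)^2\geq 2\al+1$, each summand in the sum is at least $1$, so the bracket is at least $1-z_i+z_i=1>0$. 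Consequently maximising $H_{ii}$ over the Bethe box reduces to minimising $q_i(1-q_i)$ on $[A_i,1-B_i]$. Concavity of $t(1-t)$ places that minimum at an endpoint, and the feasibility condition $A_i+B_i\leq 1$ (non-empty Bethe box) ensures the endpoint further from $1/2$ corresponds to $\eta_i=\min(A_i,B_i)$, yielding $q_i(1-q_i)\geq \eta_i(1-\eta_i)$. Taking the max over $i$ produces $b$.

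The one mildly delicate step is verifying that the bracket multiplying $1/[q_i(1-q_i)]$ has a definite positive sign; otherwise the direction of the bound on $q_i(1-q_i)$ would be wrong for our purposes. The single-line observation $(\al+1)^2\geq 2\al+1$ handles this. Everything else is essentially bookkeeping that inserts the previously derived second-derivative formulas and bounds (Theorem \ref{thm:2nderiv}, Lemmas \ref{lem:xiub} and \ref{lem:Tlb}) into the appropriate slots, so I do not anticipate any serious obstacle.
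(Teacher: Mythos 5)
Your proof is correct and follows essentially the same route as the paper: the off-diagonal bound combines Theorem \ref{thm:2nderiv}, the final inequality of Lemma \ref{lem:xiub}, the elementary $m(1-M)\leq\tfrac14$ and Lemma \ref{lem:Tlb}, while the diagonal bound inserts the intermediate inequality from the proof of Lemma \ref{lem:Tlb} into \eqref{eq:Hii}. Your explicit verification that the bracket $1-z_i+\sum_{j\in\N(i)}(\al+1)^2/(2\al+1)\geq 1>0$ before replacing $q_i(1-q_i)$ by $\eta_i(1-\eta_i)$ is actually tidier than the paper's own intermediate display, which substitutes $\eta_i(1-\eta_i)$ under the negative $(1-z_i)$ term prematurely (a step that goes the wrong way on its own), though the final bound is the same.
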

\begin{proof}
For any edge $(i,j) \in \cal{E}$, 
$$-H_{ij} = \frac{\x-\s q_j}{T_{ij}} \leq \frac{ m (1-M)\al}{1+\al} \frac{1}{K_{ij}} \leq \frac{1}{4} \frac{\al}{1+\al}\frac{1}{K_{ij}}.$$
Using \eqref{eq:Hii} and the expression from the proof of Lemma \ref{lem:Tlb},
\begin{align*}
H_{ii} &\leq \frac{1-z_i}{\eta_i(1-\eta_i)} + \sum_{j \in \N(i)} \frac{1}{ q_i (1-q_i)\Big[1-\Big(\frac{\al}{1+\al} \Big)^2 \Big] } \\
&\leq \frac{1}{\eta_i(1-\eta_i)} \Big( 1-z_i + \sum_{j \in \N(i)} \frac{(\al+1)^2}{2\al+1} \Big).
\end{align*}\end{proof}

Since $\al+1 < 2 \al +1$ we have the corollary that $H_{ii} < \frac{1 + \sum_{j \in \N(i)} \al} {\eta_i (1-\eta_i)}$. We remark that at any minimum of the Bethe free energy, all eigenvalues are $\geq 0$ so at these locations the maximum eigenvalue $\leq$ Tr $H < \sum_{i \in \mathcal{V}} \frac{1}{\eta_i (1-\eta_i)} + \sum_{(i,j) \in \mathcal{E}} \al \Big( \frac{1}{\eta_i (1-\eta_i)} + \frac{1}{\eta_j (1-\eta_j)} \Big)$. 

In order to bound the largest eigenvalue, we may use recent results such as Corollary 2 in \cite{Zhan05}, although we suspect that the particular properties of $H$ given in Lemma \ref{lem:diag} may admit more precise bounds.

Here we use the following elementary bound which allows us to relate to the concepts of sparsity or maximum degree as in \cite{Shin12}. Let $\Sigma$ be the proportion of non-zero entries in $H$ so the number of non-zero entries is $n^2 \Sigma \leq n + n\Delta \Rightarrow \Sigma \leq \frac{\Delta+1}{n}$, since we have the main diagonal terms and two entries for each edge. Let $\Omega=\max(a,b)$ from Theorem \ref{thm:ab}, we have
\begin{equation}\label{eq:lambda}
\Lambda \leq \sqrt{\text{tr}(H^T H)} \leq \sqrt{\Sigma n^2 \Omega^2} = n \Omega \sqrt{\Sigma}.
\end{equation}

Returning to the reasoning at the start of this section \ref{sec:final}, note that by using $N_i$ points in $D_i$ we can ensure $\gamma \leq (1-B_i-A_i)/(N_i+1)$.  Using worst case Bethe bounds ($A_i=B_i=0$) we achieve maximum $\gamma$ distance in each dimension with $\frac{1}{\gamma}$ points for each variable, so the total number of nodes in the max-flow graph we need to solve the multi-label graph cuts problem is $N \leq \frac{n}{\gamma}$.  We require $n \gamma ^2 \leq \frac{2 \epsilon}{\Lambda}$ hence $N^2 \geq \frac{n^3 \Lambda}{2 \epsilon}$. Using \eqref{eq:lambda} it is sufficient if $N^2 \geq \frac{n^4 \Omega \sqrt{\Sigma}}{2 \epsilon}$. Graph cuts is a max-flow algorithm for which there are push-relabel methods guaranteed to run in time $O(N^3)$ \cite{Gol88}. Hence our algorithm has worst case run time of $\frac{n^6 \Omega^{3/2} \Sigma^{3/4}}{\sqrt{8} \epsilon^{3/2}} = O(n^6 \Sigma^{3/4} \Omega ^{3/2} \epsilon^{-3/2})$. However, in practice runtime for this class of problem using the Boykov-Kolmogorov algorithm \cite{BoyKol04} often approaches $O(N)$ for dramatically improved performance.

Note $\Omega$ above may depend of $n$. For our analysis in this paper we assumed the reparameterization in \eqref{eq:E} but a natural specification avoiding bias is to provide maximum possible values $W^*$ and $\theta^*$ with
\begin{align*}
\theta_{ij} &= \begin{pmatrix} W_{ij}/2 & 0 \\ 0 & W_{ij}/2 \end{pmatrix} \text{ s.t. } W_{ij} \leq W^* \; \forall (i,j) \in \cal{E} \\
|\h| &\leq \theta^* \; \forall i \in \cal{V}.
\end{align*}
The required reparameterization for edge $(i,j)$ takes $\h \leftarrow \h - W_{ij}/2$, hence reparameterizing all edges takes $\h \leftarrow \h - \sum_{j \in \N(i)} W_{ij}/2$. A sufficient condition for $\frac {1}{\eta_i (1-\eta_i)}$ to have a polynomial upper bound is that the maximum degree $\Delta := \max_{i \in \cal{V}} z_i = O(\log n)$, the same degree restriction as in \cite{Shin12}. In this case, $\frac{1}{\eta_i (1-\eta_i)} = O(\exp (\theta^* + \Delta W^* /2))$.

Regarding Theorem \ref{thm:ab}, now $a= O(\exp(W^*+2\theta^*+\Delta W^*)), b= O(\Delta \exp(W^*+\theta^*+\Delta W^*/2)$
with $\Omega=\max(a,b)$ and $\Sigma=O(\Delta/n)$ yielding the polynomial result. 

\section{Conclusion \& Extensions}

To our knowledge, we have proved the first PTAS for the global optimum of the Bethe free energy of an associative binary pairwise MRF. In doing so we derived a range of other results, including several for general edges and models (associative or not), which may prove useful in their own right, including Bethe bound propagation. 

Although our algorithm is only weakly polynomial, we are not sure if it is possible to do better. Note that if we make no restriction on input parameters, then potentially $\alpha$ values could be infinite, corresponding to probability distributions with exactly zero probability for some states (which may be reasonable), and this will lead to infinite derivatives as some pseudo-marginal entries will be driven to 0. 

\cite{Tar11} has shown that graph cuts is in a strong sense equivalent to max-product belief propagation with careful scheduling and damping. Together with our result this shows an interesting link between max-product and sum-product techniques. One direction to explore is how sum-product belief propagation fares using a scheme similar to \cite{Tar11}. 

We note that our approach immediately also applies to approximating optimum mean field marginals.  In addition, it may readily extend to allow approximate marginal inference for multi-label and third order submodular MRFs, both of which can be mapped to equivalent associative binary pairwise MRFs \cite{SchFla06,RKAT08}.

%
%
%
%


\bibliographystyle{plain}
\bibliography{references}

\end{document}